\newtheorem{theorem}{Theorem}[section]
\newtheorem{lemma}[theorem]{Lemma}
\theoremstyle{definition}
\newtheorem{definition}[theorem]{Definition}
\newtheorem{assumption}[theorem]{Assumption}
\theoremstyle{remark}
\newcommand{\norm}[1]{\left\lVert#1\right\rVert}
\DeclareMathOperator*{\argmin}{arg\,min}
\newcommand{\decision}{\boldsymbol x}
\newcommand{\decisionset}{\mathcal{X}}
\newcommand{\parameter}{\boldsymbol \theta}
\newcommand{\parameterset}{\boldsymbol\Theta}
\newcommand{\error}{\epsilon}
\newcommand{\N}{\mathbb{N}}
\newcommand{\R}{\mathbb{R}}
\newcommand{\E}{\mathbb{E}}
\newcommand{\OPT}{\mathrm{OPT}} %%%Zhao: I add this
\newcommand{\ALG}{\mathrm{ALG}} %%%Zhao: I add this
\newcommand{\cost}{\mathrm{cost}} %%%Zhao: I add this
\title{Smoothed Online Combinatorial Optimization Using Imperfect Predictions}
\author{
    %Authors
    % All authors must be in the same font size and format.
    Kai Wang,\textsuperscript{\rm 1}\thanks{Work done during an internship at Adobe Research.} %\thanks{With help from the AAAI Publications Committee.} 
    Zhao Song,\textsuperscript{\rm 2} 
   Georgios Theocharous,\textsuperscript{\rm 2}
   Sridhar Mahadevan\textsuperscript{\rm 2}
   % George Ferguson,
%    Hans Guesgen,
 %   Francisco Cruz\equalcontrib,
    %Marc Pujol-Gonzalez\equalcontrib
}
\title{My Publication Title --- Single Author}
\author {
    Author Name
}
\title{My Publication Title --- Multiple Authors}
\author {
    % Authors
    First Author Name,\textsuperscript{\rm 1}
    Second Author Name, \textsuperscript{\rm 2}
    Third Author Name \textsuperscript{\rm 1}
}
\begin{document}

\maketitle

\begin{abstract}
Smoothed online combinatorial optimization considers a learner who repeatedly chooses a combinatorial decision to minimize an unknown changing cost function with a penalty on switching decisions in consecutive rounds. We study smoothed online combinatorial optimization problems when an imperfect predictive model is available, where the model can forecast the future cost functions with uncertainty. We show that using predictions to plan for a finite time horizon leads to regret dependent on the total predictive uncertainty and an additional switching cost. This observation suggests choosing a suitable planning window to balance between uncertainty and switching cost, which leads to an online algorithm with guarantees on the upper and lower bounds of the cumulative regret. Empirically, our algorithm shows a significant improvement in cumulative regret compared to other baselines in synthetic online distributed streaming problems.

\end{abstract}

\section{Introduction}
We consider the {\it smoothed online combinatorial optimization} problem, which is an extension of online convex optimization~\cite{hazan2019introduction,shalev2011online,zinkevich2003online,hazan2007logarithmic} and smoothed online convex optimization~\cite{lin2012online,lin2012dynamic}.
In the smoothed online combinatorial optimization problem, an online learner is repeatedly optimizing a cost function with unknown changing parameter. In every time step, the learner chooses a feasible decision from a combinatorial feasible region before observing the parameter of the cost function. After the learner chooses the decision, the learner receives (i) the cost function parameter and the associated cost (ii) an additional known switching cost function dependent on the chosen decision and the previous decision. The goal of the learner is to minimize the cumulative cost in $T$ time steps, including cost produced by the cost function and the switching cost.

Smoothed online combinatorial optimization is commonly seen in applications with online combinatorial decisions and switching penalty, including ride sharing with combinatorial driver-customer assignment~\cite{jia2017optimization}, distributed streaming system with bipartite data-to-server assignment~\cite{garg2013apache,thein2014apache}, and A-B testing in advertisement~\cite{bhat2020near}.
All these examples incur a potential switching cost when the decisions are changed, e.g., reassigning drivers or data to different locations or servers is costly, and changing advertisement campaign requires additional human resources.
The challenge of online combinatorial decision-making and the presence of hidden switching cost motivate the study of smoothed online combinatorial optimization. 

In this paper, we study the smoothed online combinatorial optimization where an imperfect predictive model is available. 
We assume that the predictive model can forecast the future cost parameters with uncertainties, and the uncertainties can evolve over time.
We measure the performance of online algorithms by {\it dynamic regret}, which assumes a dynamic offline benchmark, i.e., the optimal performance when the cost function parameters are given a priori and the sequential decisions are allowed to change.
The same use of predictions and dynamic regret are also studied in receding horizon control~\cite{mattingley2011receding,camacho2013model} in smoothed online convex optimization under different assumptions on the predictions~\cite{chen2015online,badiei2015online,chen2016using,li2020leveraging,li2020online}. 
In our case, the challenges of bounding dynamic regret inherit from smoothed online convex optimization, while the additional combinatorial structure further complicates the analysis.

\subsection*{Main Contribution}
Our main contribution is an online algorithm that plans ahead using the imperfect predictions within a dynamic planning window determined based on the predictive uncertainty of the predictive model.
We summarize our contributions as follows:

\begin{itemize}
    \item Given imperfect predictions with uncertainties, we show that planning based on predictions within a finite time horizon leads to a regret bound that is a function of the total predictive uncertainty with an additional potential switching cost. This bound quantifies one source of regret corresponding to the imperfectness of the predictions, while the other source comes from the additional switching cost (Theorem~\ref{thm:finite-proactive-planning}).
    \item Our regret bound in finite time horizon suggests using a dynamic planning window to optimally balance two sources of regret coming from predictive uncertainty and the switching cost, respectively. Iteratively selecting a dynamic planning window to plan ahead leads to a regret bound in infinite time horizon (Theorem~\ref{thm:dynamic-proactive-planning}).
    \item Specifically, when the uncertainties converge to $0$ when more data is collected, we show that the cumulative regret is always sublinear (Theorem~\ref{thm:regret-bound-full}), which guarantees the no-regretness of Algorithm~\ref{alg:infinite-proactive-planning}. We also quantify the dependency of the cumulative regret on the convergence rate of the uncertainty in some special cases (Corollary~\ref{coro:regret-bound}).
    \item Lastly, we show a lower bound on the total regret for any randomized online algorithm when predictive uncertainty is present. The order of the lower bound matches to the order of the upper bound in some special cases, which guarantees the tightness of our online algorithm and the corresponding regret bounds (Corollary~\ref{coro:lower-bound}).
\end{itemize}

% We first establish a bound on the dynamic regret dependent on both the total uncertainty within a finite planning window and the switching cost.

% This observation suggests choosing a dynamic planning window to balance between the total uncertainty in the planning window and the switching cost, where shorter planning windows can switch too frequently with a large switching cost, and longer planning windows can suffer from larger predictive uncertainty.

% Using dynamic planning windows, we show an upper bound on the cumulative regret when the predictive uncertainty shrinks polynomially in time horizon.

% We also provide a lower bound on the cumulative regret, whose order matches to the upper bound and ensures the tightness of the regret bound.

Lastly, given predictions and dynamic planning windows, the smoothed online combinatorial optimization problem reduces to an offline combinatorial problem. We use an iterative algorithm to find an approximate solution to the offline problem efficiently, which largely reduces the computation cost compared to solving the large combinatorial problem using mixed-integer linear program.

Empirically, we evaluate our algorithm on the online distributed streaming problem motivated from Apache Kafka with synthetic traffic. We compare our algorithm using predictions and dynamic planning windows with various baselines. Our algorithm using predictions outperforms baselines without using predictions. Our experiments show an improvement of choosing the right dynamic planning windows against algorithms using fixed planning window, which demonstrates the importance of balancing uncertainty and the switching cost. The use of iterative algorithm also largely reduces the computation cost while keeping a comparable performance, leading to an effective scalable online algorithm that can be applied to real-world problems.

\iffalse
We list our contributions as follows:
\begin{itemize}
    \item Given imperfect predictions with uncertainties, to the best of our knowledge, we're the first to show planning based on predictions within a finite time horizon can lead to a regret bound that is replying on both the total predictive uncertainty and the switching cost.
    \item The regret bound suggests choosing a dynamic planning window to plan in order to balance two sources of regret coming from predictive uncertainty and the switching cost, respectively.
    \item The use of dynamic planning window leads to a regret bound in the infinite time horizon case. More specifically, we establish a closed-form upper bound on the regret when the uncertainty is polynomial is the number of time horizon.
    \item Lower bound.
\end{itemize}
\fi

\subsection{Related Work}
% Our smoothed online combinatorial optimization problem is an extension of smoothed online convex optimization in the combinatorial setting, where we briefly summarize several related directions below.

\paragraph{Online convex optimization}
Online convex optimization~\cite{gemp2016online,hazan2019introduction,shalev2011online,zinkevich2003online} assumes the objective function is convex and no switching cost.
In online convex optimization, {\it static regret} is most commonly used, which assumes a static benchmark with full information but the decisions over the entire time steps have to be static. 
Various variants of online gradient descents~\cite{zinkevich2003online,hazan2007logarithmic,bartlett2007adaptive,srebro2011universality,flaxman2004online} were proposed with bounds on the static regret.
However, the gradient-based approaches and the regret bounds do not directly generalize to the combinatorial setting due to the discreteness of the feasible region.

\paragraph{Smoothed online convex optimization with predictions}
Smoothed online convex optimization generalizes online convex optimization by assuming a switching cost that defines the cost of moving from the previous decision to the current one.
\cite{andrew2013tale} showed that smoothed online convex optimization can achieve the same static regret bound using the algorithms in online convex optimization without switching cost.
In terms of dynamic regret, receding horizon control~\cite{mattingley2011receding} was proposed to leverage the predictions of future time step to make decision.
Perfect~\cite{lin2012dynamic,lin2012online} and imperfect~\cite{chen2015online,chen2016using,li2020leveraging,li2020online} predictions are used to bound the performance of receding horizon control with fixed planning window size.
Separately, chasing convex bodies~\cite{sellke2020chasing,bubeck2019competitively,bubeck2020chasing,friedman1993convex} shares the same challenge of smoothed online convex optimization but focuses on the competitive ratio.

Nonetheless, the analyses in the convex objectives and feasible regions do not apply to the combinatorial setting. The planning window in receding horizon control is also restricted to be fixed across different time steps.

\paragraph{Online combinatorial optimization and metrical task system}
Online combinatorial optimization assumes a discrete feasible region that the learner can choose from {\it before} seeing the cost function. Existing results~\cite{audibert2014regret,koolen2010hedging} focus on bounding dynamic regret in the case of linear objectives without switching cost.
On the other hand, metrical task system assumes $n$ discrete states that the learner can choose {\it after} seeing the cost function, and there is a metrical switching cost associated to every switch. Existing results focus on bounding {\it competitive ratio}, where the competitive ratio is lower bounded by $\Omega(\frac{\log n}{\log \log n})$~\cite{bartal2006ramsey,bartal2003metric} and upper bounded by $O(\log^2 n)$~\cite{bubeck2021metrical}. 
In contrast, {\it dynamic regret} is a stronger additive guarantee and is more challenging to analyze.

Our work shows that analyzing dynamic regret in an arbitrary smoothed online combinatorial optimization problem becomes tractable when an imperfect predictive model is given.

\section{Problem Statement}
An instance of smoothed online combinatorial optimization is composed of a cost function $f: \decisionset \times \parameterset \rightarrow \R_{\geq 0}$ where $\decision \in \decisionset$ denotes all the feasible decisions that can be taken and $\parameter \in \parameterset$ denotes all the possible unknown parameters of the cost function, and a metric $d: \decisionset \times \decisionset \rightarrow \R_{\geq 0}$ that is used to measure the distance of different decisions.
At each time step $t$, the learner receives a feature $\xi_t \in \Xi$ that is correlated to the unknown parameters in the future.
Based on the given feature $\xi_t$, the learner can predict the future parameters and choose a feasible decision $\decision_t \in \decisionset$ without seeing the future parameter $\parameter_t$. The parameter $\parameter_t$ is revealed after the decision is executed and the learner receives an objective cost $f(\decision_t, \parameter_t)$ with a switching cost $d(\decision_{t-1}, \decision_t)$ which measures the movement of the decisions made by time step $t$ and $t-1$. The total cost of an online algorithm $\ALG$ up to time $T$ is the summation of both the objective cost and the switching cost across all time steps:
\begin{align*}
    \text{cost}_T(\ALG) = \sum\nolimits_{t=1}^T f(\decision_t, \parameter_t) + d(\decision_t, \decision_{t-1}).
\end{align*}

We want to compare to the offline benchmark $\OPT$ in time $T$ that knows all the parameters in advance, which minimizes the total cost defined below:
\begin{align*}
    \text{cost}_T(\OPT) = \min\limits_{\decision_t \in \decisionset ~\forall t} ~\sum\nolimits_{t =1}^T f(\decision_t, \parameter_t) + d(\decision_t, \decision_{t-1})
\end{align*}

\begin{definition}
An online algorithm $\ALG$ has a dynamic regret $\rho(T)$ if we have:
\begin{align*}
\mathrm{Reg}_T \coloneqq \cost_T(\ALG) - \cost_T(\OPT) \leq \rho(T) \quad \forall T.
\end{align*}
\end{definition}

The goal of the learner is to design an online algorithm with a small dynamic regret bound $\rho(T)$.

\subsection{Example: Online Distributed Streaming Systems}\label{sec:experiment}
One application of smoothed online combinatorial optimization problems is the online load balancing problem in the distributed streaming system known as Apache Kafka~\cite{garg2013apache,thein2014apache}. The system is composed of $k$ topics of streaming data and $m$ servers as shown in Figure~\ref{fig:kafka}. At each time step $t$, the system maintains a bipartite assignment $\decision_t$ between $k$ topics and $m$ servers so that the servers can process the streaming data in real time. 
Specifically, each topic must be assigned to exactly one server.
We use $\decision_t \in \decisionset_t \subseteq \{0,1\}^{k \times m}$ with $\decision_{t,i,j} = 1$ to denote assigning the topic $i$ to server $j$ at time $t$. The learner can use the parameters in the prior $H$ time steps as the feature $\xi_t$ that is correlated to the unknown future parameters. After the assignment is chosen, a new traffic vector $\parameter_t \in \R^k$ arrives with each entry representing the number of incoming messages associated to the topic. Figure~\ref{fig:kafka} illustrates how the data-to-server assignment works. A commonly used server imbalance cost is defined as makespan $f(\decision_t, \parameter_t) = \norm{\decision_t^\top \parameter_t}_{\infty}$, the largest load across all servers.

\begin{figure}[t]
    \centering
    \includegraphics[width=0.95\linewidth]{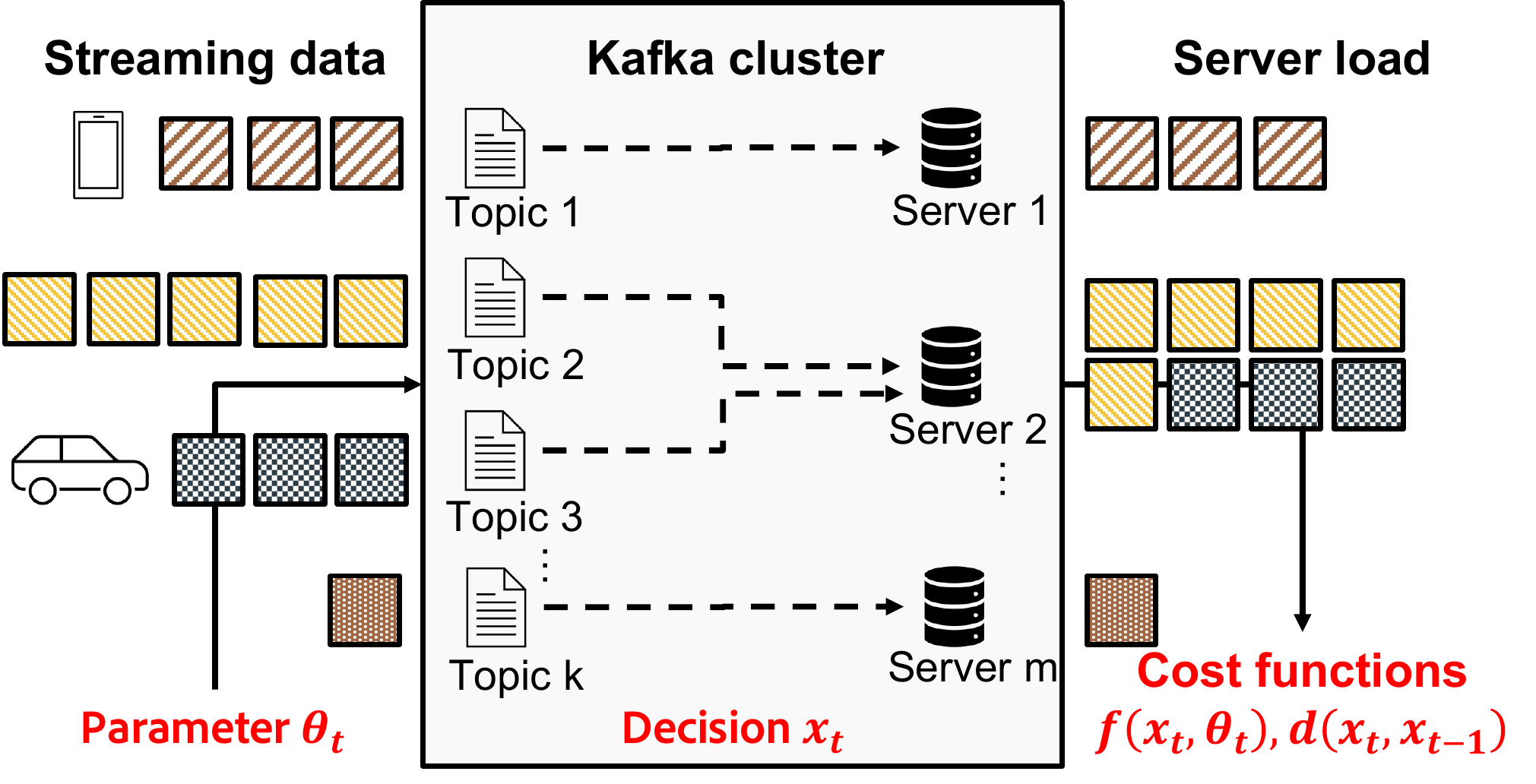}
    \caption{Apache Kafka maintains a bipartite assignment $\decision_t$ between $k$ topics and $m$ servers to prepare for processing the streaming data. The streaming traffic $\parameter_t$ comes later and gets routed to the corresponding servers. A server imbalance cost $f(\decision_t, \parameter_t)$ and a switching cost $d(\decision_t, \decision_{t-1})$ due to assignment change are received.}
    \label{fig:kafka}
\end{figure}

\paragraph{Paper structure}
We first discuss how planning based on predictions works and how to bound the associated dynamic regret using predictive uncertainty.
Second, we discuss two different sources of regret, predictive uncertainty and the number of planning windows used. We propose to use a dynamic planning window to balance the tradeoff with a regret bound derived.
Third, we propose an iterative algorithm to solve an offline problem by decoupling the temporal dependency caused by switching cost.
Lastly, an application in distributed streaming system and Apache Kafka is discussed and used in our experiments.

\begin{figure*}[t]
    \centering
    \begin{subfigure}{0.32\linewidth}
        \centering
        \includegraphics[width=\textwidth]{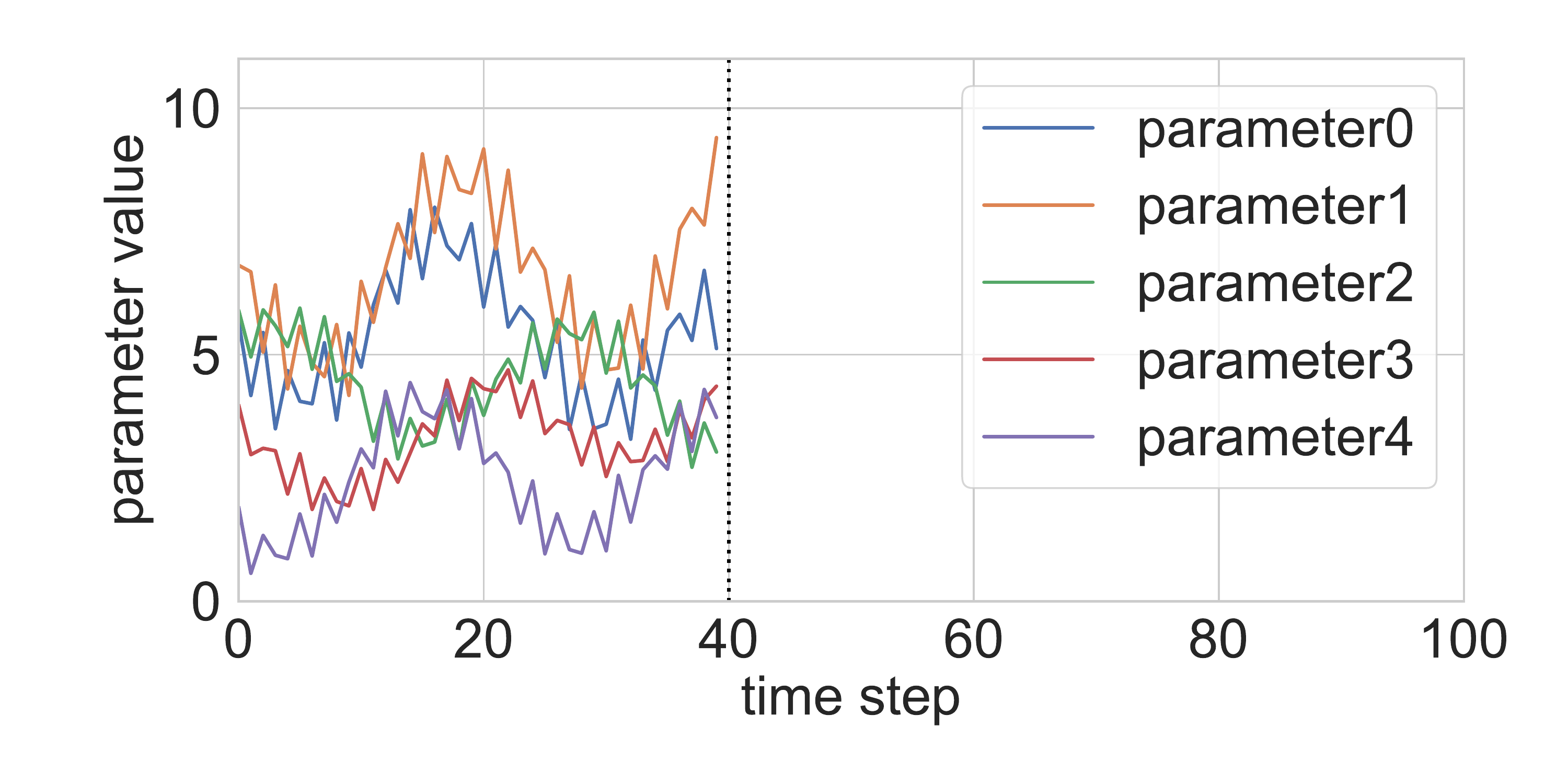}
        \caption{The learner has access to the historical parameters $\{ \parameter_s \in \R^k \}_{s < t}$. We plot each entry of the parameter prior to time $t$ as a time series to visualize the trend.}
        \label{fig:historical-data}
    \end{subfigure}
    \hfill
    \begin{subfigure}{0.32\linewidth}
        \centering
        \includegraphics[width=\textwidth]{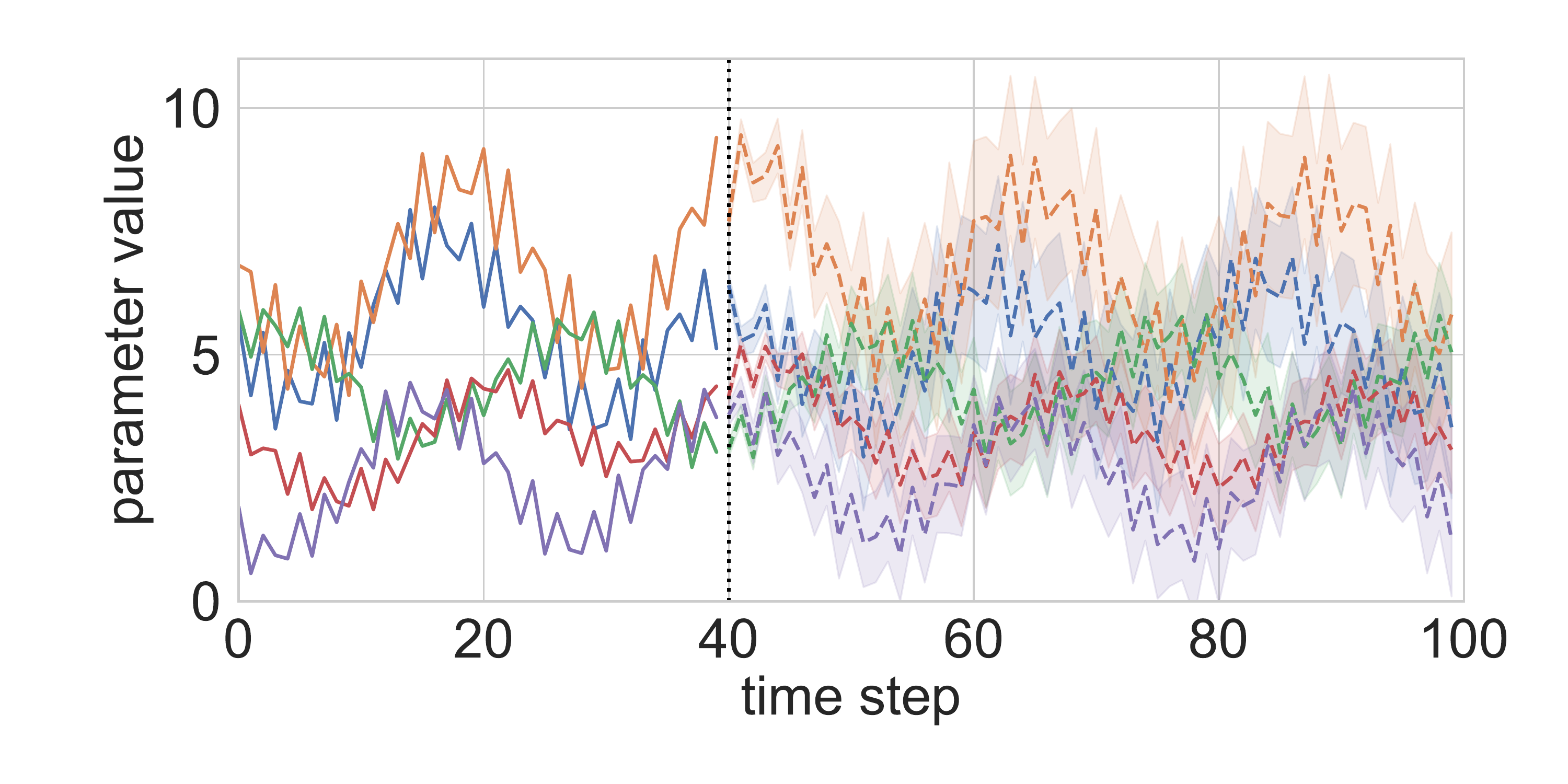}
        \caption{The learner predicts the future parameters with uncertainty. Each entry of the parameter corresponds to a time series prediction problem.}
        \label{fig:prediciton-with-uncertainty}
    \end{subfigure}
    \hfill
    \begin{subfigure}{0.32\linewidth}
        \centering
        \includegraphics[width=\textwidth]{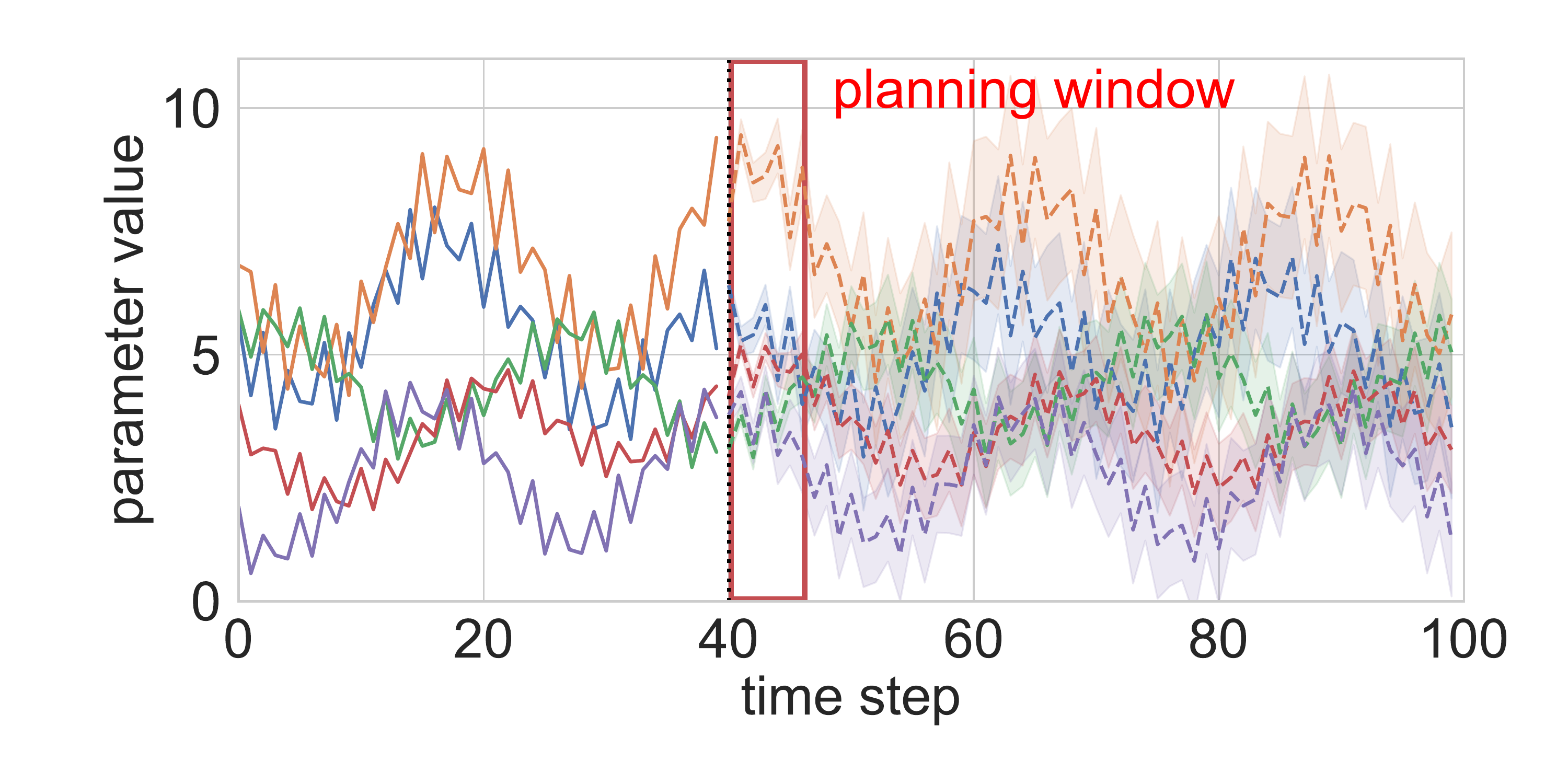}
        \caption{Given the predictions, we choose a dynamic planning window such that the total uncertainty within the window is of the same order of the switching cost.}
        \label{fig:planning-window}
    \end{subfigure}
    \hfill
    \begin{subfigure}{\linewidth}
        \centering
        \includegraphics[width=0.85\textwidth]{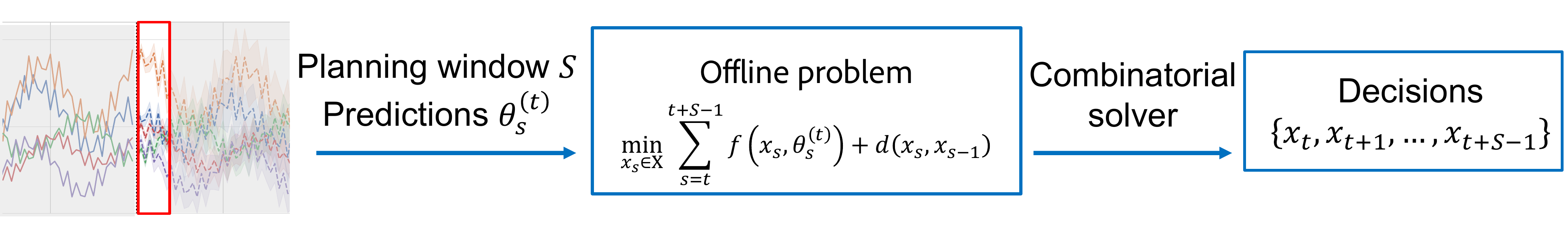}
        \caption{Given the predictions and the planning window, the planning problem reduces to an offline combinatorial problem. We can use any combinatorial solver to find a solution to the offline problem. The solution is executed in the planning window.}
        \label{fig:offline-problem}
    \end{subfigure}
    \caption{This flowchart summarizes how predictions are used to derive planning decisions. Fig.~\ref{fig:historical-data} shows the historical data prior to time $t$ as multiple time series. Fig.~\ref{fig:prediciton-with-uncertainty} visualizes the predictions and uncertainty intervals learned from the historical parameters. Fig.~\ref{fig:planning-window} demonstrates how to determine the dynamic planning window. Fig.~\ref{fig:offline-problem} solves an offline problem and executes accordingly.}
    \label{fig:flowchart}
\end{figure*}

\section{Planning Using Predictions}\label{sec:proactive-planning}
Motivated by the use of predictions in smoothed online convex optimization~\cite{chen2016using,li2020leveraging,antoniadis2020online}, this section studies the connection of predictions and predictive uncertainties to the dynamic regret.
To conduct the regret analysis below, we require the following assumptions to hold:
\begin{assumption}\label{assumption:lipschitzness}
The cost function $f(\decision, \parameter)$ is Lipschitz in $\parameter \in \parameterset$ with Lipschitz constant $L$, i.e.,  $\| \frac{\partial f (\decision, \parameter) }{\partial \parameter} \| \leq L$ for all $\decision \in \decisionset$ and $\parameter \in \parameterset$.
\end{assumption}

\begin{assumption}\label{assumption:bounded-switching-cost}
The switching cost is upper bounded in the feasible region $\decisionset$ by $B = \sup_{x, y \in \decisionset} d(\decision, \boldsymbol y)$.
\end{assumption}

Assumption~\ref{assumption:lipschitzness} quantifies the change of the cost function with respect to the parameter.
% Lipschitness is a common assumption. For example, when $f$ is defined by makespan, a unit of additional traffic can at most increase a unit of the maximal server load, yielding an upper bound $L=1$ on the Lipschitzness.
Assumption~\ref{assumption:bounded-switching-cost} quantifies the upper bound of switching cost.

\subsection{Predictions with Uncertainty}
\begin{assumption}\label{assumption:predictive-model}
We assume there is a predictive model that is trained based on the revealed parameters prior to time $t$. At time $t$, the predictive model takes the feature $\xi_t$ and produces a sequence of predicted future parameters $\{ \parameter^{(t)}_{s} \}_{s \in \N, s \geq t }$ with uncertainty $\{ \error^{(t)}_s \}_{s \in \N, s \geq t }$, where the distance between the prediction $\parameter^{(t)}_{s}$ and the true parameter $\parameter_{s}$ at time $s$ is bounded by $\| \parameter_{s} - \parameter^{(t)}_{s} \| \leq \error^{(t)}_s$.
\end{assumption}
We also assume that the predictive uncertainty $\error^{(t)}_s$ increases in $s$ due to the difficulty of predicting further future parameters, while the predictive uncertainty decreases in $t$ due to more training data available to train the predictive model.

% An example of the correlated feature is the last $H$ parameters observed prior to time $t$. We can use any predictive model to make prediction on the future parameters with uncertainty.

\subsection{Planning in Fixed Time Horizon}
% Specifically, at time $t$, the predictions of the future parameters are denoted by  
We first analyze the regret in fixed time horizon when we use the predictions to plan accordingly.
More precisely, at time $t$, given the previous decision $\decision_{t-1}$ at time $t-1$ and the prediction $\{ \parameter^{(t)}_{s} \}_{s \in \N, s \geq t }$ of the future time steps, the learner selects a planning window $S \in \N$ to plan for the next $S$ time steps by solving a minimization problem:
\begin{align}
     & \{ \decision_{s} \}_{s \in \{ t, t+1, \cdots, t+S-1 \}} \nonumber \\
    &= \argmin_{ \decision_{s} \in \decisionset ~\forall s} ~ \sum\nolimits_{s = t}^{t+S-1} f(\decision_{s}, \parameter^{(t)}_{s}) + d(\decision_{s}, \decision_{s-1}). \label{eqn:finite-optimization-problem}
\end{align}

Solving the above finite time horizon optimization problem suggests a solution $\{ \decision_{s} \}_{s \in \{t,t+1,\cdots, t+S-1 \}}$ in the next $S$ time steps to execute starting from time $t$.
This process is summarized in Fig.~\ref{fig:flowchart}.

However, since the predictions are not perfect, the suggested solution might not be the true optimal solution when the true cost function parameters are present. To compare with the true offline optimal solution using the true cost function parameters, we express the offline solution by:
\begin{align}
     & \{ \decision'_{s} \}_{s \in \{t,t+1,\cdots, t+S-1 \}} \nonumber \\
    & = \argmin\limits_{ \decision_{s} \in \decisionset_{s} ~\forall s} ~ \sum\nolimits_{s=t}^{t+S-1} f(\decision_{s}, \parameter_{s}) + d(\decision_{s}, \decision_{s-1}) \label{eqn:finite-optimization-problem-optimal}
\end{align}

The only difference between Eq.~\eqref{eqn:finite-optimization-problem} and Eq.~\eqref{eqn:finite-optimization-problem-optimal} is that Eq.~\eqref{eqn:finite-optimization-problem-optimal} has full access to the future cost parameters, while Eq.~\eqref{eqn:finite-optimization-problem} uses the predictions instead. We can define the difference by the following regret:
\begin{align}
     & \text{Reg}_{t}^{t+S-1}(\decision_{t-1}) \!
     = \! \left( \sum\nolimits_{s=t}^{t+S-1} f(\decision_{s}, \parameter_{s}) \! + \! d(\decision_{s}, \decision_{s-1}) \right) \! \nonumber \\
     & - \! \left( \sum\nolimits_{s=t}^{t+S-1} f(\decision'_{s}, \parameter_{s}) \! + \! d(\decision'_{s}, \decision'_{s-1}) \right). \label{eqn:partial-regret-definition}
\end{align}

We have the following bound on the regret:
\begin{restatable}{theorem}{segmentedRegret}\label{thm:finite-proactive-planning}
Under Assumption~\ref{assumption:lipschitzness}, the regret from time step $t$ to $t+S-1$ in Eq.~\ref{eqn:partial-regret-definition} is upper bounded by:
$
    \mathrm{Reg}_{t}^{t+S-1}(\decision_{t-1}) \leq 2 L \sum _{s = t}^{t+S-1}  \error^{(t)}_{s}.
$
where $L$ is the Lipschitz constant in Assumption~\ref{assumption:lipschitzness}.
\end{restatable}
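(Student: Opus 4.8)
The plan is to read Eq.~\eqref{eqn:finite-optimization-problem} and Eq.~\eqref{eqn:finite-optimization-problem-optimal} as minimizing two objective functionals over the \emph{same} set of feasible decision sequences anchored at the common previous decision $\decision_{t-1}$, and then to exploit that these two functionals are uniformly close. Concretely, for a feasible sequence $\boldsymbol z = (\decision_t,\dots,\decision_{t+S-1})$ write $C(\boldsymbol z) \coloneqq \sum_{s=t}^{t+S-1} f(\decision_s,\parameter_s) + d(\decision_s,\decision_{s-1})$ for the true cost and $\hat C(\boldsymbol z) \coloneqq \sum_{s=t}^{t+S-1} f(\decision_s,\parameter^{(t)}_s) + d(\decision_s,\decision_{s-1})$ for the predicted cost, with $\decision_{t-1}$ fixed in both. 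The switching terms are identical in $C$ and $\hat C$ since $d$ does not depend on the cost parameters, so $C$ and $\hat C$ differ only through the $f$ terms.

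First I would prove the key uniform estimate $|C(\boldsymbol z) - \hat C(\boldsymbol z)| \le L\sum_{s=t}^{t+S-1}\error^{(t)}_s$ for every feasible $\boldsymbol z$. This follows from the triangle inequality over the $S$ summands together with Assumption~\ref{assumption:lipschitzness}, which gives $|f(\decision_s,\parameter_s) - f(\decision_s,\parameter^{(t)}_s)| \le L\,\norm{\parameter_s - \parameter^{(t)}_s}$ (integrating $\partial f/\partial\parameter$ along the segment joining $\parameter^{(t)}_s$ and $\parameter_s$), and Assumption~\ref{assumption:predictive-model}, which gives $\norm{\parameter_s-\parameter^{(t)}_s}\le\error^{(t)}_s$. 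Denote the right-hand side by $E \coloneqq L\sum_{s=t}^{t+S-1}\error^{(t)}_s$.

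The remaining argument is the standard ``optimizing a perturbed objective'' chain. Let $\boldsymbol z^{\mathrm{pred}} = (\decision_t,\dots,\decision_{t+S-1})$ be the minimizer of $\hat C$ from Eq.~\eqref{eqn:finite-optimization-problem} and $\boldsymbol z^{\star} = (\decision'_t,\dots,\decision'_{t+S-1})$ the minimizer of $C$ from Eq.~\eqref{eqn:finite-optimization-problem-optimal}. Applying the uniform estimate to $\boldsymbol z^{\mathrm{pred}}$, then the optimality of $\boldsymbol z^{\mathrm{pred}}$ for $\hat C$, then the uniform estimate to $\boldsymbol z^{\star}$ gives
\[
C(\boldsymbol z^{\mathrm{pred}}) \;\le\; \hat C(\boldsymbol z^{\mathrm{pred}}) + E \;\le\; \hat C(\boldsymbol z^{\star}) + E \;\le\; C(\boldsymbol z^{\star}) + 2E .
\]
Since $\mathrm{Reg}_{t}^{t+S-1}(\decision_{t-1}) = C(\boldsymbol z^{\mathrm{pred}}) - C(\boldsymbol z^{\star})$ by Eq.~\eqref{eqn:partial-regret-definition}, we conclude $\mathrm{Reg}_{t}^{t+S-1}(\decision_{t-1}) \le 2E = 2L\sum_{s=t}^{t+S-1}\error^{(t)}_s$.

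I do not expect a genuine analytic obstacle here: the chain of inequalities never uses convexity or differentiability in $\decision$, only the Lipschitz bound in $\parameter$ and the definition of $\argmin$, so the combinatorial/discrete nature of $\decisionset$ is irrelevant. The one modeling point that must be pinned down is that the offline sequence $\boldsymbol z^{\star}$ in Eq.~\eqref{eqn:finite-optimization-problem-optimal} is anchored at the same $\decision_{t-1}$ as the planned sequence, so that the $s=t$ switching terms $d(\decision_t,\decision_{t-1})$ and $d(\decision'_t,\decision_{t-1})$ live in the identical (parameter-independent) part of both functionals and thus drop out of the perturbation estimate; one must also check that both problems range over the same feasible region. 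Under the conventions of the excerpt this holds, and Assumption~\ref{assumption:bounded-switching-cost} is not even needed for this particular bound.
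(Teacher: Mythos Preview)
Your proposal is correct and is essentially the same argument as the paper's proof: the paper also fixes the common anchor $\decision'_{t-1}=\decision_{t-1}$, writes the regret as the three-term telescoping sum corresponding exactly to your chain $C(\boldsymbol z^{\mathrm{pred}})\le \hat C(\boldsymbol z^{\mathrm{pred}})+E\le \hat C(\boldsymbol z^{\star})+E\le C(\boldsymbol z^{\star})+2E$, and bounds the first and third pieces by Lipschitzness while the middle one is nonpositive by optimality of $\boldsymbol z^{\mathrm{pred}}$ for $\hat C$. Your observation that Assumption~\ref{assumption:bounded-switching-cost} is unnecessary here is also in line with the paper, which only invokes it later in Theorem~\ref{thm:dynamic-proactive-planning}.
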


Theorem~\ref{thm:finite-proactive-planning} links the dynamic regret with the total predictive uncertainty in finite time horizon.
Notice that the switching cost terms in Eq.~\eqref{eqn:partial-regret-definition} are misaligned. Therefore, the proof requires not only the Lipschitzness of the objective function $f$ but also the optimality conditions of both the offline and online planning problems to bound the total cumulative regret.
% Theorem~\ref{thm:finite-proactive-planning} suggests that the total predictive uncertainty within the planning window is the major source of the cumulative regret.

\begin{algorithm}[t]
\caption{Dynamic Future Planning}
\label{alg:infinite-proactive-planning}
{\bf Input:} Total time steps $T$. Maximal switching cost $B$. A predictive model that can produce predictions $\{ \parameter^{(t)}_{t+s} \}_{s \in \N}$ at time $t$. \\
\begin{algorithmic}[1]
\STATE {\bf Initialization} $t=1$, \# of planning windows $I = 0$
\WHILE{$t \leq T$}
    \STATE Get predictions $\{ \parameter^{(t)}_{s} \}_{s \in \N, s \geq t}$ and predictive uncertainty $\{ \error^{(t)}_{s} \}_{s \in \N, s \geq t}$ from the model.
    \STATE Find the largest $S$ s.t. $2 L \sum\nolimits _{s=t}^{t+S-1} \error^{(t)}_{s} \leq B$.
    \STATE Solve the optimization problem in Eq.~\eqref{eqn:finite-optimization-problem} with starting time $t$ and planning window $S$ to get $\{ \decision_{s} \}_{s \in \{t,t+1,\cdots,t+S-1 \} }$. \label{line:optimization-step}
    \STATE Execute $\decision_{s}$ and receive $\parameter_s$ with cost $f(\decision_s, \parameter_s) + d(\decision_s, \decision_{s-1})$ at time $s \in \{t,\cdots, t+S-1 \}$.
    \STATE Set $t = t + S$, $I = I + 1$.
\ENDWHILE % \\
% \textbf{return:} $I$.
\end{algorithmic}
\end{algorithm}

\subsection{Infinite Time Horizon and Dynamic Planning Window}
In the inifinite time horizon problem, the main idea is to reduce the problem to multiple finite time horizon problems with different planning window sizes.

Recall that the predictive uncertainty often increases when we try to predict the parameters in the far future, i.e., $\error^{(t)}_s$ is increasing in $s$. 
Since the regret in Theorem~\ref{thm:finite-proactive-planning} directly relates to the predictive uncertainty in the planning window, it suggests keeping the planning window small to reduce the regret.

On the other hand, Theorem~\ref{thm:finite-proactive-planning} assumes an identical initial decision $\decision_{t-1}$ in the online problem (Eq.~\eqref{eqn:finite-optimization-problem}) and offline problem (Eq.~\eqref{eqn:finite-optimization-problem-optimal}).
In the infinite time horizon case, two algorithms may start from different initial decisions, which may create an additional regret upper bounded by the maximum switching cost $B$ due to the misalignment of the initial decision. This observation suggests using larger planning windows to avoid changing between different planning windows.

Therefore, we propose to balance two sources of regret by choosing the largest planning window $S$ such that:
\begin{align}\label{eqn:planning-window-choice-test}
    2 L \sum\nolimits_{s=t}^{t+S-1} \error^{(t)}_{s} \leq B
\end{align}
The choice of the dynamic planning window can ensure that the total excessive predictive uncertainty is upper bounded by cost $B$, while we also plan as far as possible to reduce the number of planning windows incurred during switching between different finite time horizons.
The algorithm is described in Algorithm~\ref{alg:infinite-proactive-planning}.

\begin{restatable}{theorem}{totalRegret}\label{thm:dynamic-proactive-planning}
Given Lipschitzness $L$ in Assumption~\ref{assumption:lipschitzness} and the maximal switching cost $B$ in Assumption~\ref{assumption:bounded-switching-cost}, in $T$ time steps, Algorithm~\ref{alg:infinite-proactive-planning} achieves cumulative regret upper bounded by $2BI$, where $I$ is the total number of planning windows used in Algorithm~\ref{alg:infinite-proactive-planning}.
\end{restatable}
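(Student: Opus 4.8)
The plan is to use the fact that Algorithm~\ref{alg:infinite-proactive-planning} partitions $\{1,\dots,T\}$ into $I$ consecutive windows, say window $i$ spanning times $\{t_i,\dots,t_i+S_i-1\}$ with $t_1=1$ and $t_{i+1}=t_i+S_i$, and to bound $\mathrm{Reg}_T$ window by window. Let $\decision_t$ denote the algorithm's decisions and $\decision^{\star}_t$ the offline optimum attaining $\cost_T(\OPT)$. Charging each boundary switching term $d(\decision_{t_i},\decision_{t_i-1})$ to the later window, both $\cost_T(\ALG)$ and $\cost_T(\OPT)$ decompose \emph{exactly} into sums over the $I$ windows of the corresponding objective-plus-switching costs, since the windows tile $\{1,\dots,T\}$. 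So it suffices to control, for each $i$, the algorithm's cost on window $i$ minus the restriction of $\cost_T(\OPT)$ to window $i$.

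First I would introduce a \emph{local} benchmark for each window. On window $i$ the algorithm solves Eq.~\eqref{eqn:finite-optimization-problem} with initial decision $\decision_{t_i-1}$ (its own last decision from window $i-1$) and horizon $S_i$, so Theorem~\ref{thm:finite-proactive-planning} gives
\[
  \mathrm{Reg}_{t_i}^{t_i+S_i-1}(\decision_{t_i-1}) \;\le\; 2L\sum\nolimits_{s=t_i}^{t_i+S_i-1}\error^{(t_i)}_s \;\le\; B ,
\]
where the second inequality is exactly the rule (line~4, Eq.~\eqref{eqn:planning-window-choice-test}) by which $S_i$ was chosen to be largest. Recalling from Eq.~\eqref{eqn:partial-regret-definition} that $\mathrm{Reg}_{t_i}^{t_i+S_i-1}(\decision_{t_i-1})$ equals the algorithm's window-$i$ cost minus the window-$i$ cost of the local offline solution $\decision'$ of Eq.~\eqref{eqn:finite-optimization-problem-optimal}, summing over $i$ yields $\cost_T(\ALG) - \sum_i(\text{local offline cost on window }i) \le BI$.

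Second I would relate the local offline costs back to $\cost_T(\OPT)$. The global optimum $\{\decision^{\star}_s\}_{s\in\text{window }i}$ is feasible for the local problem Eq.~\eqref{eqn:finite-optimization-problem-optimal}; the only discrepancy between its objective value there and the restriction of $\cost_T(\OPT)$ to window $i$ is the boundary switching term, since the local problem fixes the predecessor of $\decision_{t_i}$ to be the algorithm's $\decision_{t_i-1}$ rather than $\decision^{\star}_{t_i-1}$. Thus the local offline optimum on window $i$ is at most
\[
  \Big(\sum\nolimits_{s=t_i}^{t_i+S_i-1}\! f(\decision^{\star}_s,\parameter_s)+d(\decision^{\star}_s,\decision^{\star}_{s-1})\Big) - d(\decision^{\star}_{t_i},\decision^{\star}_{t_i-1}) + d(\decision^{\star}_{t_i},\decision_{t_i-1}),
\]
and since $d\ge 0$ and $d(\decision^{\star}_{t_i},\decision_{t_i-1})\le B$ by Assumption~\ref{assumption:bounded-switching-cost}, this exceeds the restriction of $\cost_T(\OPT)$ to window $i$ by at most $B$. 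Summing gives $\sum_i(\text{local offline cost on window }i) \le \cost_T(\OPT)+BI$.

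Combining the two steps, $\mathrm{Reg}_T = \cost_T(\ALG)-\cost_T(\OPT) \le BI + BI = 2BI$. The one delicate point is the second step: the misalignment between each local offline problem's fixed initial decision and the global offline trajectory is precisely the ``extra switching cost'' flagged in the discussion preceding the theorem, and it is what forces the factor $2$ rather than $1$; everything else is bookkeeping over the partition together with a direct invocation of Theorem~\ref{thm:finite-proactive-planning} and the window-selection rule.
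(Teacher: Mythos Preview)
Your proposal is correct and follows essentially the same approach as the paper: decompose $\{1,\dots,T\}$ into the $I$ planning windows, introduce for each window the local offline optimum sharing the algorithm's initial decision, bound the algorithm-versus-local-offline gap by Theorem~\ref{thm:finite-proactive-planning} and the window-selection rule (contributing $B$ per window), and then bound the local-offline-versus-global-offline gap by feasibility of $\decision^{\star}$ in the local problem together with $d\le B$ at the boundary (contributing another $B$ per window). The paper presents these two contributions in the opposite order but the argument is identical in substance.
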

\begin{proof}[Proof sketch]
The regret of our algorithm comes from two parts: (i) regret from the discrepancy of the initial decision $\decision_{t-1}$ and the initial decision of the offline optimal $\decision^*_{t-1}$ at time $t$, the start of every planning window, and (ii) the incorrect predictions used in the optimization, which is bounded by Theorem~\ref{thm:finite-proactive-planning}.

The regret in part (i) is bounded by $d(\decision_{t-1}, \decision^*_{t-1}) \leq B$ for every planning window because it would take at most the maximal switching cost $B$ to align different initial decisions before we can compare. Thus the total regret in part (i) is bounded by $BI$, where $I$ is the number of planning windows executed in Algorithm~\ref{alg:infinite-proactive-planning}.

The regret in part (ii) is bounded by Theorem~\ref{thm:finite-proactive-planning} and the choice of the dynamic planning window in Eq.~\eqref{eqn:planning-window-choice-test}. We have $\mathrm{Reg}_t^{t+S-1}(\decision^*_{t-1}) \leq 2L\sum _{s=t}^{t+S_i-1} \error_s^{(t)} \leq B$ for the $i$-th window. We can take summation over all planning windows to bound the total regret in part (ii) by:
$
\sum _{i=1}^I B = BI.
$
where combining two bounds concludes the proof.
\end{proof}

Theorem~\ref{thm:dynamic-proactive-planning} links the excessive dynamic regret to $I$, the number of planning windows that Algorithm~\ref{alg:infinite-proactive-planning} uses.
The next step is to bound the number of planning windows $I$ by the total time steps $T$.
In Theorem~\ref{thm:regret-bound-full}, we first show that the cumulative regret is always sublinear in $T$ when the predictive uncertainty converges to $0$ when more data is collected.

% Lastly, in Corollary~\ref{coro:regret-bound}, we provide a few examples to quantify the dependency of the cumulative regret on the convergence rate of the predictive uncertainty and an additional lower bound on the cumulative regret.

% which is satisfied when predictive uncertainty shrinks as more data is collected. More precisely, Corollary~\ref{coro:regret-bound} quantifies the convergence rate of the cumulative regret when the predictive uncertainty satisfies a specific form.

\begin{restatable}{theorem}{regretBoundFull}\label{thm:regret-bound-full}
Under Assumption~\ref{assumption:lipschitzness} and~\ref{assumption:bounded-switching-cost}, if $\error^{(t)}_{t+s-1} = o(1)$ in $t$ for all $s \in \N$, i.e., $\error^{(t)}_{t+s-1} \rightarrow 0$ when $t \rightarrow \infty$, then the cumulative regret of Algorithm~\ref{alg:infinite-proactive-planning} is sublinear in $T$.
\end{restatable}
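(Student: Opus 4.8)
The plan is to combine Theorem~\ref{thm:dynamic-proactive-planning}, which gives a cumulative regret bound of $2BI$ in terms of the number of planning windows $I$, with a lower bound on the sizes of those windows coming from the decay of the uncertainties. Since $B$ and $L$ are constants, it suffices to show that $I = o(T)$. Equivalently, writing $S_1, S_2, \dots, S_I$ for the consecutive window lengths chosen by Algorithm~\ref{alg:infinite-proactive-planning}, with $\sum_{i=1}^I S_i = T$ (ignoring the at-most-one truncated final window, which contributes negligibly), I want to argue that the average window length $T/I$ tends to infinity as $T \to \infty$, which forces $I/T \to 0$.

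The key observation is that the window-selection rule (line 4 of the algorithm) makes $S$ as large as possible subject to $2L\sum_{s=t}^{t+S-1}\error^{(t)}_s \leq B$. So the chosen $S$ at time $t$ is at least as large as any fixed $S_0$ for which $2L\sum_{s=t}^{t+S_0-1}\error^{(t)}_s \leq B$ holds. Now fix an arbitrary target length $S_0 \in \N$. For each $s \in \{1,\dots,S_0\}$, the hypothesis $\error^{(t)}_{t+s-1} = o(1)$ in $t$ means there is a threshold $t_s$ beyond which $\error^{(t)}_{t+s-1} \leq B/(2LS_0)$; taking $t^\star(S_0) = \max_{1 \leq s \leq S_0} t_s$, for every start time $t \geq t^\star(S_0)$ we get $2L\sum_{s=t}^{t+S_0-1}\error^{(t)}_s \leq 2L \cdot S_0 \cdot \frac{B}{2LS_0} = B$, so the algorithm picks a window of length at least $S_0$ whenever it starts at or after time $t^\star(S_0)$. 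In other words: for any $S_0$, all but finitely many rounds of the algorithm use windows of length $\geq S_0$.

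From here the counting argument finishes the job. Fix $\varepsilon > 0$ and choose $S_0$ with $1/S_0 < \varepsilon/2$. Let $t^\star = t^\star(S_0)$ as above. The time steps before $t^\star$ are covered by at most $t^\star$ windows (a crude bound, since each window has length $\geq 1$); every window that starts at time $\geq t^\star$ has length $\geq S_0$, so there are at most $T/S_0$ of those. Hence $I \leq t^\star + T/S_0$, and therefore $2BI/T \leq 2Bt^\star/T + 2B/S_0$. The first term vanishes as $T \to \infty$ since $t^\star$ is a constant depending only on $S_0$ (not on $T$), and the second term is $< B\varepsilon$; since $\varepsilon$ was arbitrary, $\mathrm{Reg}_T/T \leq 2BI/T \to 0$, i.e. the regret is sublinear.

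The only mild subtlety — and the step I would be most careful about — is the bookkeeping for the last, possibly truncated, window: when $t + S - 1 > T$ the algorithm effectively runs a shorter window, but this only removes at most one window's worth of cost and does not affect the $o(T)$ conclusion, so it can be absorbed into the additive constant. A second point worth stating explicitly is that the argument uses only pointwise-in-$s$ decay of $\error^{(t)}_{t+s-1}$ (uniformity over $s$ is not needed because $S_0$ is fixed before choosing the thresholds), which is exactly the hypothesis given; no rate of convergence is assumed, which is why the theorem yields sublinearity but not an explicit rate (the latter is deferred to Corollary~\ref{coro:regret-bound}).
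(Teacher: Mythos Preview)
Your proposal is correct and follows essentially the same approach as the paper: reduce to $I = o(T)$ via Theorem~\ref{thm:dynamic-proactive-planning}, then argue that the window sizes diverge because the uncertainties vanish. The paper's own proof is only a two-sentence sketch (``window size $\to \infty$, hence $I = o(T)$''), whereas you have filled in the quantitative step with an explicit $\varepsilon$--$S_0$ counting argument, which is exactly what a rigorous version of that sketch requires.
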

\begin{proof}
When the predictive uncertainty $\error_s^{(t)} \rightarrow 0$ when $t \rightarrow \infty$, the window size $S_t$ that satisfies $2L\sum _{s=t}^{t+S_i-1} \error_s^{(t)} \leq B$ at time $t$ converges to $\infty$ when $t \rightarrow \infty$. This suggests that the number of windows $I$ required in total number of time steps $T$ is strictly smaller than $\Theta(T)$, i.e., $I = o(T)$. By Theorem~\ref{thm:dynamic-proactive-planning}, the cumulative regret is upper bounded by $2BI = o(T)$, which is sublinear in $T$.
\end{proof}

Theorem~\ref{thm:regret-bound-full} guarantees that the cumulative regret of Algorithm~\ref{alg:infinite-proactive-planning} in Theorem~\ref{thm:dynamic-proactive-planning} is sublinear when the uncertainty converges to $0$. This establishes the no-regretness of Algorithm~\ref{alg:infinite-proactive-planning} in dynamic regret, which is only known to be possible in the smoothed online {\it convex} optimization but not known in the smoothed online {\it combinatorial} optimization.

In some special cases of the predictive uncertainty, we can further provide a more precise bound on the cumulative regret in the following corollary.
\begin{restatable}{corollary}{regretBoundPoly}\label{coro:regret-bound}
If the uncertainty satisfies $\error^{(t)}_{t+s-1} = O(\frac{s^a}{t^b})$, $\forall s, t \in \N$ with $a, b \in \R_{\geq 0}$, we have:
% \Zhao{In general, for a corollary, we should only write the conclusion.}
\begin{align*}
    \mathrm{Reg}_T \leq \begin{cases}
    O(T^{1 - \frac{b}{a+1}}) & \text{if $b < a+1$} \\
    O(\log T) & \text{if $b = a+1$} \\
    O(\log \log T) & \text{if $b > a+1$}
    \end{cases}.
\end{align*}
% If $\error^{(t)}_{t-1+s} = O(\frac{e^s}{t^b}) ~\forall s \in \N, t \in \N$, we have:
% \begin{align*}
%     \mathrm{Reg}_T = O(\frac{BT}{b\log T})
% \end{align*}
\end{restatable}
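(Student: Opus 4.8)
The plan is a counting argument on top of Theorem~\ref{thm:dynamic-proactive-planning}: since that theorem already gives $\mathrm{Reg}_T \le 2BI$ with $I$ the number of planning windows produced by Algorithm~\ref{alg:infinite-proactive-planning}, it suffices to show $I = O(T^{1-b/(a+1)})$ when $b<a+1$, $I=O(\log T)$ when $b=a+1$, and $I=O(\log\log T)$ when $b>a+1$. Everything reduces to lower bounding the window lengths chosen by the rule in Eq.~\eqref{eqn:planning-window-choice-test} and then solving the resulting growth recursion for the window start times.

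First I would estimate the window length. Reindexing the sum in Eq.~\eqref{eqn:planning-window-choice-test} by $j=s-t+1$, the hypothesis $\error^{(t)}_{t+s-1}=O(s^a/t^b)$ and $\sum_{j=1}^{S} j^a=\Theta(S^{a+1})$ (valid for $a\ge 0$) give $2L\sum_{s=t}^{t+S-1}\error^{(t)}_s \le 2LC\,S^{a+1}/t^b$ for a constant $C$. Hence the constraint in Eq.~\eqref{eqn:planning-window-choice-test} holds for every $S\le (B/(2LC))^{1/(a+1)}\,t^{b/(a+1)}$, and since the algorithm takes the \emph{largest} feasible $S$, the $i$-th window (starting at time $t_i$, with $t_1=1$) has length $S_i \ge \kappa\,t_i^{\beta}$ where $\beta:=b/(a+1)$ and $\kappa>0$ is a constant, with the understanding that for the finitely many early windows where $\kappa t_i^\beta<1$ (equivalently $t_i$ below a constant threshold) we only use $S_i\ge 1$; those contribute $O(1)$ to $I$. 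It is worth noting that only this lower bound on $S_i$ is needed: larger windows can only decrease $I$, and a lower bound is precisely what the one-sided assumption on $\error$ provides.

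Next I would solve $t_{i+1}=t_i+S_i\ge t_i+\kappa t_i^{\beta}$, running until the first index with $t_i> T$ (so $t_I\le T$, and $I$ is the total window count). When $\beta<1$, an induction shows $t_i\ge c\,i^{1/(1-\beta)}$ for a suitable $c\le 1$: the inductive step uses $(i+1)^{1/(1-\beta)}-i^{1/(1-\beta)}\le \tfrac{1}{1-\beta}(i+1)^{\beta/(1-\beta)}$ together with $\bigl(\tfrac{i+1}{i}\bigr)^{\beta/(1-\beta)}\le 2^{\beta/(1-\beta)}$, which is absorbed by choosing $c$ small since $c^{\beta-1}=c^{-(1-\beta)}\to\infty$; then $c\,I^{1/(1-\beta)}\le t_I\le T$ gives $I=O(T^{1-\beta})=O(T^{1-b/(a+1)})$. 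When $\beta=1$, $t_{i+1}\ge(1+\kappa)t_i$ yields $t_i\ge(1+\kappa)^{i-1}$, so $I\le 1+\log_{1+\kappa}T=O(\log T)$. When $\beta>1$, once $t_i$ exceeds a constant threshold $t_0>1$ we have $\kappa t_i^\beta\ge t_i^{\beta'}$ for a fixed $\beta'\in(1,\beta)$, hence $\log t_{i+1}\ge\beta'\log t_i$ and $\log t_i$ grows doubly exponentially; since reaching $t_0$ costs only $O(1)$ windows, $\log T\ge\log t_I\ge(\beta')^{\,I-O(1)}\log t_0$ gives $I=O(\log\log T)$. Multiplying each bound by $2B$ yields the three stated cases.

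The main obstacle is the $\beta<1$ regime: the increments $\kappa t_i^\beta$ are sublinear, so $t_i$ grows only like $i^{1/(1-\beta)}$ and one must carry the exponent $1/(1-\beta)$ faithfully through the induction while picking $c$ to dominate $\sup_{i\ge1}(1+1/i)^{\beta/(1-\beta)}=2^{\beta/(1-\beta)}$. The remaining wrinkles — the floor in the largest feasible $S$, and the finitely many windows where $\kappa t_i^\beta<1$ (or even where $S=1$ must be forced) — perturb $I$ only by an additive constant and so do not affect the asymptotic order.
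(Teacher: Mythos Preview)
Your proposal is correct and follows essentially the same route as the paper: reduce to bounding $I$ via Theorem~\ref{thm:dynamic-proactive-planning}, lower bound each window length by $\kappa\,t_i^{b/(a+1)}$ from the window rule and the $O(s^a/t^b)$ hypothesis, and then solve the recursion $t_{i+1}\ge t_i+\kappa t_i^{\beta}$ in the three regimes $\beta\lessgtr 1$. The only cosmetic differences are that the paper derives the lower bound on $S_i$ from the fact that $S_i+1$ \emph{violates} Eq.~\eqref{eqn:planning-window-choice-test} (rather than from feasibility of all smaller $S$), and it packages the recursion analysis as a standalone lemma proved via the inequality $x^k+e^kx^{k-1}\ge(x+1)^k$, whereas you inline the induction using the mean-value estimate and the bound $2^{\beta/(1-\beta)}$; both yield the same growth orders for $t_i$ and hence for $I$.
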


Corollary~\ref{coro:regret-bound} is proved by providing a more concrete bound on the number of planning windows $I$ in Theorem~\ref{thm:dynamic-proactive-planning}.
Corollary~\ref{coro:regret-bound} also quantifies the dependency of the cumulative regret on the convergence rate of predictive uncertainty. When $b >0$, the cumulative regret is always sublinear, which matches our result in Theorem~\ref{thm:regret-bound-full}.

% The faster the predictive uncertainty is, the smaller the cumulative regret produced by Algorithm~\ref{alg:infinite-proactive-planning}.

\subsection{Lower Bound on The Cumulative Regret}
In this section, we provide a lower bound on the expected cumulative regret, showing that no randomized algorithm can achieve an expected cumulative regret lower than a term similar to the upper bound.

\begin{restatable}{corollary}{lowerBound}\label{coro:lower-bound}
Given $\epsilon^{(t)}_{t+s-1} = \Omega(\frac{s^a}{t^b})$ for all $t,s \in \N$ with $0 \leq b$, there exist instances such that for any randomized algorithm, the expected regret is at least: 
\begin{align}\nonumber
    \E[\mathrm{Reg}_T] \geq \begin{cases}
    \Omega(T^{1-b}) & \text{if $b < 1$} \\
    \Omega(\log T) & \text{if $b = 1$} \\
    \Omega(1) & \text{if $b > 1$}
\end{cases}.
\end{align}
\end{restatable}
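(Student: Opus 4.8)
The plan is to construct an adversarial instance where the predictive model is forced to be uninformative at the scale of the lower bound, so that no randomized algorithm can outperform the naive ``never switch'' strategy by more than the claimed amount. The core intuition is that if the per-step uncertainty is $\epsilon^{(t)}_{t+s-1} = \Omega(s^a/t^b)$, then any planning window started at time $t$ can reliably ``see'' only a bounded number of future steps before the accumulated uncertainty exceeds the switching cost $B$; beyond that, the algorithm is effectively guessing, and the adversary can punish every guess by roughly $B$.

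First I would set up a two-decision instance: take $\decisionset = \{\decision^{(0)}, \decision^{(1)}\}$ with $d(\decision^{(0)},\decision^{(1)}) = d(\decision^{(1)},\decision^{(0)}) = B$, so the metric is trivially valid. I would partition the horizon $T$ into consecutive epochs whose lengths match the window sizes $S_t$ implied by the uncertainty growth: solving $2L\sum_{s=t}^{t+S-1} \epsilon^{(t)}_s = \Theta(S^{a+1}/t^b) \asymp B$ gives $S_t = \Theta(t^{b/(a+1)})$, hence the number of epochs up to time $T$ is $\Theta(T^{1-b/(a+1)})$ when $b<a+1$, and $\Theta(\log T)$ or $\Theta(1)$ in the boundary and super-critical regimes. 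Wait — the stated lower bound has exponents $1-b$, not $1-b/(a+1)$; so the argument must be that the adversary only needs to flip the ``good'' decision at a rate governed by where uncertainty reaches a constant (i.e. $\epsilon \asymp 1$, giving $t^b \asymp s^a$, and with $s$ itself constant this is $t \asymp$ constant, so the flip-spacing at time $t$ is $\Theta(t^{b/a})$)... Let me instead align the epochs to the coarser schedule implied purely by the $t^{-b}$ decay: the adversary holds the cost function constant on $[t, 2t]$-style dyadic-or-polynomial blocks so that there are $\Theta(T^{1-b})$ blocks (for $b<1$), $\Theta(\log T)$ blocks (for $b=1$), $\Theta(1)$ blocks (for $b>1$), matching the claim.

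Next I would fix the cost functions so that on block $k$, exactly one of $\decision^{(0)}, \decision^{(1)}$ is ``good'' (cost $0$) and the other is ``bad'' (cost $c$ for a constant $c$ comparable to $B$), with the identity of the good decision chosen uniformly at random and \emph{independently} across blocks. The key claim is that at the start of block $k$, the predictive uncertainty about that block's parameter is $\Omega(1)$ — this is exactly what $\epsilon^{(t)}_{t+s-1} = \Omega(s^a/t^b)$ buys us when the block is at ``distance'' comparable to its own length, since then $s \asymp$ (block length) and the ratio $s^a/t^b$ is bounded below by a constant by the block-schedule construction. Consequently, by Yao's principle, any deterministic algorithm facing this random instance must, on each block, either (a) sit on a decision that is bad with probability $\geq 1/2$, incurring expected cost $\Omega(c)$ over that block, or (b) switch into the block, paying $B$. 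Either way it pays $\Omega(\min(B,c)) = \Omega(1)$ in expectation per block beyond $\OPT$, while $\OPT$ — knowing all parameters — pays $0$ for costs on every block and at most $B$ per block boundary for switches; choosing block boundaries to be $\Omega(1)$ in number per ``unit'' but with $\OPT$'s switching sparse (the adversary can further correlate consecutive blocks so $\OPT$ rarely switches, or simply absorb $\OPT$'s switching cost into the constant) yields $\E[\mathrm{Reg}_T] \geq \Omega(\#\text{blocks})$.

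The main obstacle I anticipate is making precise the claim that ``uncertainty $\Omega(s^a/t^b)$ at the right scale forces a constant-probability error.'' The assumption only lower-bounds $\epsilon^{(t)}_s$, which is an \emph{upper} bound on prediction error, so a large $\epsilon$ does not by itself force the prediction to be wrong — I need to instantiate a specific predictive model (e.g., one that outputs the prior mean and whose confidence interval genuinely has width $\Theta(s^a/t^b)$) together with a parameter distribution that is genuinely spread over that interval, and verify the model is consistent with Assumption~\ref{assumption:predictive-model}. Concretely I would let $\parameter_s$ on block $k$ be $\pm \delta_k$ with $\delta_k = \Theta(\epsilon^{(t_k)}_{s_k})$ and the model predict $0$; then $\|\parameter_s - \parameter^{(t)}_s\| = \delta_k \le \epsilon^{(t)}_s$ holds, the model is valid, and an algorithm at block start cannot distinguish the two parameter values, giving the constant error probability. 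The remaining bookkeeping — translating the number of blocks into the three cases $T^{1-b}$, $\log T$, $O(1)$ via the recursion $t_{k+1} = t_k + \Theta(t_k^{b})$ — is routine: $b<1$ gives polynomial block count, $b=1$ gives geometric growth hence logarithmically many blocks, and $b>1$ makes block lengths grow so fast that only $O(1)$ fit in $T$.
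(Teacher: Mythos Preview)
Your block-based construction has a genuine gap. The tension is between two constraints: (i) for the predictive model to satisfy $\|\parameter_s - \parameter^{(t)}_s\| \le \epsilon^{(t)}_s$ with $\epsilon^{(t)}_t = \Theta(1/t^b)$ at horizon $1$, the parameter perturbation on block $k$ must obey $\delta_k = O(1/t_k^b) \to 0$; (ii) by Assumption~\ref{assumption:lipschitzness}, the per-step cost gap between the ``good'' and ``bad'' decisions is then at most $2L\delta_k$, not a fixed constant $c$ as you posit. Because you hold $\parameter$ constant on each block, the learner observes $\parameter_{t_k}$ after the first step of block $k$ and can immediately switch to the good decision; the expected number of switches for the algorithm and for $\OPT$ are then both $1/2$ per block boundary and cancel, so the net per-block regret comes only from the single blind step, namely $\Theta(L\delta_k) = \Theta(1/t_k^b)$. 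Summing over $K = \Theta(T^{1-b})$ blocks with $t_k \asymp k^{1/(1-b)}$ gives $\sum_{k\le K} k^{-b/(1-b)}$, which is $\Theta(T^{1-2b})$ for $b < 1/2$ and $O(1)$ for $b > 1/2$ --- strictly weaker than the claimed $\Omega(T^{1-b})$.

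The paper's construction is much simpler and sidesteps this by re-randomizing at \emph{every} step and dropping the switching cost altogether. Take $\decisionset=\{0,1\}$, $d\equiv 0$, $f(\decision,\parameter)=L\lvert\decision-\parameter\rvert$, and set $\parameter_t = \tfrac12 \pm \tfrac{1}{t^b}$ with an independent fair coin at each $t$; the predictive model outputs $\parameter^{(t)}_s=\tfrac12$, which satisfies the error bound since $1/s^b \le 1/t^b = \epsilon^{(t)}_s$ for all $s \ge t$. Because the coin at time $t$ is independent of everything the learner has seen, every online decision incurs expected cost exactly $L/2$, whereas $\OPT$ pays $L(\tfrac12 - \tfrac{1}{t^b})$; the per-step regret $L/t^b$ sums directly to the three stated rates. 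The point your proposal misses is that holding the parameter fixed over a long block hands the learner free information after one observation --- flipping the coin every step is both what makes the bound tight and what renders the whole block/switching-cost scaffolding unnecessary.
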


The lower bound suggests that there is no online learning algorithm that can achieve a cumulative regret that is smaller than the regret in Corollary~\ref{coro:lower-bound}.
Specifically, we can see that the lower bound matches to the upper bound up to a logarithm factor when $a=0$, which guarantees the tightness of our upper bound in Corollary~\ref{coro:regret-bound} and Theorem~\ref{thm:dynamic-proactive-planning} in the case of $a=0$.
% Notice that the bound in Corollary~\ref{thm:lower-bound} is the expected cumulative regret, where the worst-case cumulative regret is always worse than this expected cumulative regret bound.

%We can see that the orders of the upper bound in Corollary~\ref{coro:regret-bound} and lower bound in Theorem~\ref{thm:lower-bound} are slightly different. This is partly because that the upper bound is a bound on the worst cumulative regret, while the lower bound in Theorem~\ref{thm:lower-bound} is a bound on the expected cumulative regret. 

\subsection{Extension to Probabilistic Bounds}
In this paper, we primarily focus on the deterministic uncertainty bounds of the predictive model.
The same analyses in Section~\ref{sec:proactive-planning} also generalize to probabilistic bounds of the predictive model that hold with high probability, e.g., with probability $1-\delta_i$ for each prediction in the $i$-th planning window with size $S_i$.
This kind of probabilistic bounds is commonly seen in the literature of probably approximately correct (PAC) learning, where the predictive error bound can be bounded by the number of training samples used in fitting the underlying hypothesis class.
In this case, the regret analysis in Theorem~\ref{thm:finite-proactive-planning} needs to additionally consider the event when the uncertainty bounds do not hold, which leads to an additional regret term with order $O(S_i \delta_i)$ in Theorem~\ref{thm:finite-proactive-planning}, leading to a linear term $\sum\nolimits_{i=1}^I S_i \delta_i$ in Theorem~\ref{thm:dynamic-proactive-planning}.

Fortunately, we can also select a decreasing failure probability $\delta_i$ in the later planning windows when more samples are collected.
As long as we can guarantee that the choice of uncertainty bound $\error^{(t)}_s$ and the failure probability $\delta_i$ at time $t$ converge to $0$ when more samples are collected, we can obtain a similar result as Theorem~\ref{thm:regret-bound-full} showing the cumulative regret bound is sublinear in $T$. This generalizes our results of deterministic bounds to probabilistic bounds.

% \subsection{Probabilistic Bound for Predictive Error and Modified Algorithm}\label{sec:limitation-and-extension}
% In Algorithm~\ref{alg:infinite-proactive-planning} and Corollary~\ref{coro:regret-bound}, a closed-form expression of the predictive loss is required in order to run the algorithm and the regret analysis.
% However, in practice, the predictive error might decrease over time when more data is collected, but the bound may not be strictly satisfied. 

% Instead of having a strict bound on the predictive error, we relax this to a probabilistic bound on the predictive error. In this case, the algorithm needs to be modified to respect the rare event when the predictive error bound is violated. Specifically, we recalculate the planning window and the planning process when any failure occurs. This modified version of Algorithm~\ref{alg:infinite-proactive-planning} leads to a similar regret bound, where more details can be found in Appendix~\ref{sec:probabilistic-version}.

% \input{approximate}
\section{Experiment Setup}% \label{sec:experiment}
In our experiment, we use the distributed streaming system problems with synthetic data to compare our algorithm with other baselines.

\paragraph{Cost function and switching cost}
In the distributed streaming system, the learner maintains a bipartite assignment $\boldsymbol\decision_t \in \decisionset_t \subseteq \{0,1\}^{k \times m}$ between $k$ topics and $m$ servers at time step $t$ to process the streaming data, where $\boldsymbol\decision_{t,i,j} = 1$ denotes that topic $i$ is assigned to server $j$ at time $t$ to process the incoming traffic.
Once the decision $\boldsymbol\decision_t$ is chosen at time $t$, a traffic vector $\parameter_t \in \parameterset \subseteq \R^k$ is revealed.

Given traffic $\parameter_t$ and the chosen assignment $\decision_t$, we define the cost function by $f(\decision_t, \parameter_t) = \| \decision_t^\top \parameter_t \|_\infty$ as the resulting server imbalance cost, which is also known as makespan, i.e., the maximal number of messages a server needs to process across all servers. Minimizing makespan is a well-studied strongly NP-complete problem~\cite{garey1979computers} with various approximation algorithms~\cite{hochbaum1987using,leung1989bin}. Additionally, we define the switching cost by $d(\decision, \boldsymbol y) \coloneqq \boldsymbol1_k^\top |\decision - \boldsymbol y| \boldsymbol u $, where $|\decision - \boldsymbol y| \in \R^{k \times m}_{\geq 0}$ represents the number of switches of each pair of topic and server, and each entry of $\boldsymbol u \in \R^m$ denotes the unit switching cost associated to the corresponding server, which is randomly drawn from a uniform distribution $U[0,2]$.

% \paragraph{Application and motivation}
% The application is motivated and highly aligned with Apache Kafka~\cite{garg2013apache,thein2014apache}. Apache Kafka is a distributed streaming platform used by thousands of companies. In order to process and store the streaming data in different topics in real time, Apache Kafka maintains a assignment between topics and servers so that servers can distributedly create replicas and process the requests immediately. Since the popularity of topics can change, Apache Kafka can experience server imbalance if no adjustment is made. Our work can be directly applied to Apache Kafka to achieve dynamic load balancing.

\paragraph{Data generation}
We assume that there are $k=10$ topics to be assigned to $m=3$ servers.
We generate $k$ time series, where each represents the trend of incoming traffic $\{ \parameter_{t,i} \}_{t \in [T]}$ of topic $i \in [k]$ as the cost function parameter. Each time series is generated by a composition of sine waves, an autoregressive process, and a Gaussian process to model the seasonality, trend, and the random process.
We use sine waves with periods of $24$ and $2$ with amplitudes drawn from $U[1,2]$ and $U[0.5,1]$ to model the daily and hourly changes. We use an autoregressive process AR(1) that takes the weighted sum of $0.9$ of the previous signal and a $0.1$ of a white noise to generate the next signal. Lastly, we use a rational quadratic kernel as the Gaussian process kernel.

\paragraph{Predictive model}
At time step $t$, to predict the incoming traffic $\parameter_s \in \parameterset \subseteq \R^k$ for all $s \geq t$, we collect all the historical data $\{  \parameter_{s'} \}_{s' < t}$ prior to time $t$ and apply Gaussian process regression using the same rational quadratic kernel on the historical data to generate predictions $\{ \parameter^{(t)}_{s} \}_{s \geq t}$ of the future time steps. We use the standard deviation learned from Gaussian process regression as the uncertainty $\{ \error^{(t)}_s \}_{s \geq t}$.

\paragraph{Experimental setup}
For each instance of the load balancing problem, we assume $50$ historical data have been collected a priori to stabilize Gaussian process regression. We run different online algorithms for another $100$ time steps with hidden incoming traffic to measure the performance of online algorithms. For each setup, we run $10$ independent trials with different random seeds to estimate the average performance. All the results are plotted with average value and the corresponding standard deviation.

\begin{figure*}[t]
    \centering
    \begin{subfigure}{0.325\linewidth}
        \centering
        \includegraphics[width=\textwidth]{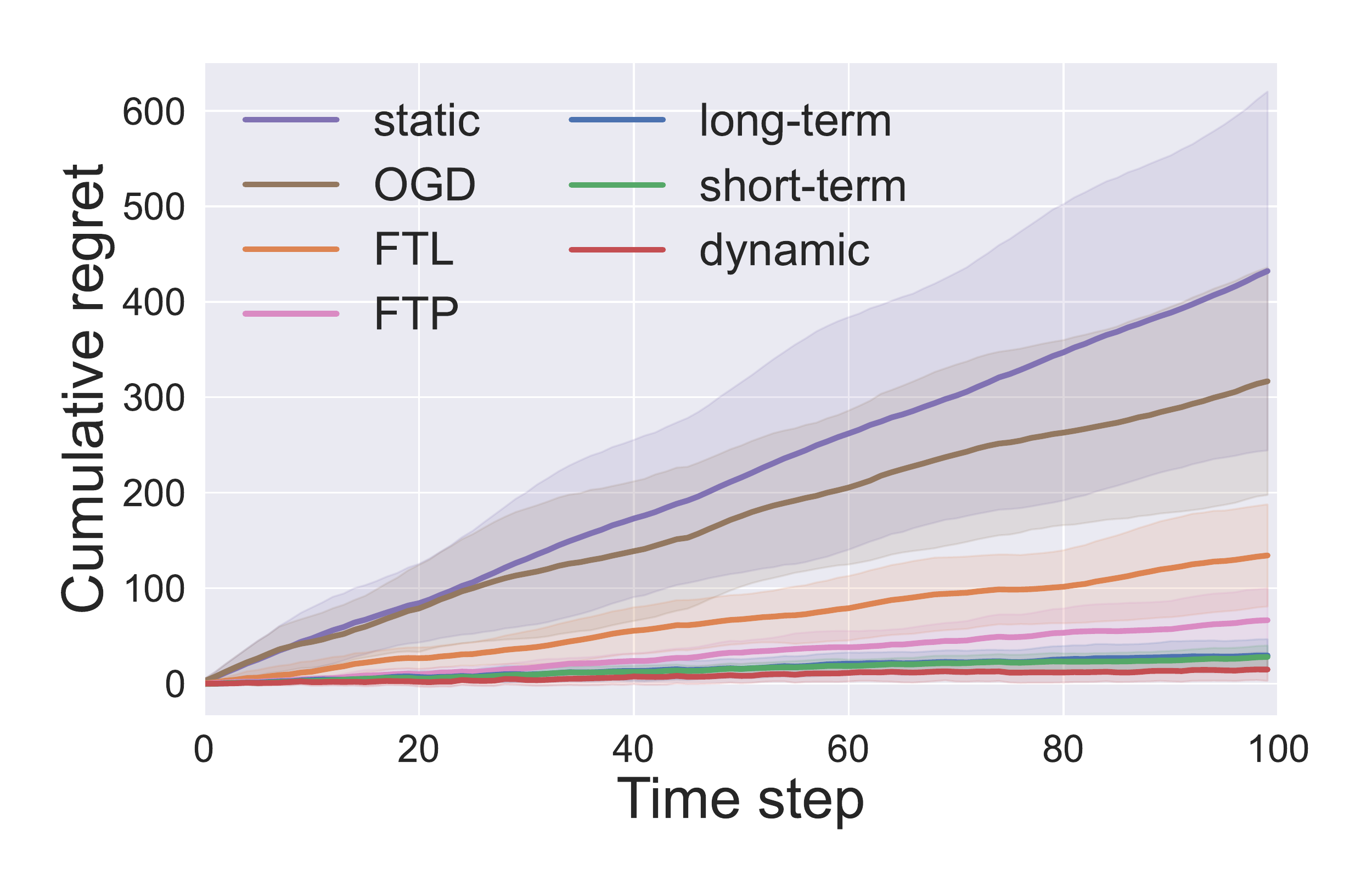}
        \caption{Average cumulative regret that includes the imbalance cost and switching cost.}
        \label{fig:cumulative-regret}
    \end{subfigure}
    \hfill
    \begin{subfigure}{0.325\linewidth}
        \centering
        \includegraphics[width=\textwidth]{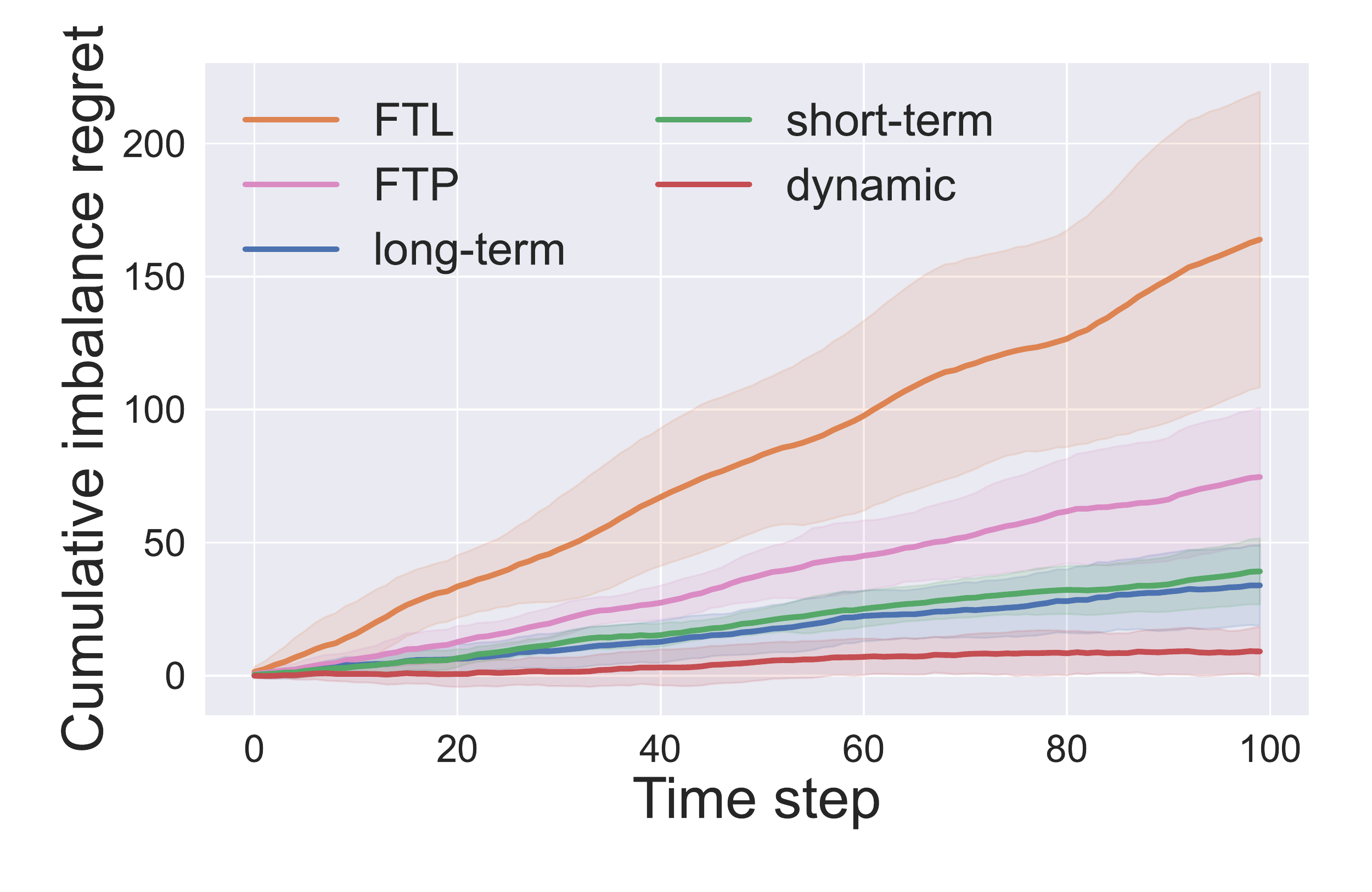}
        \caption{Average cumulative imbalance regret compared to the offline benchmark.}
        \label{fig:cumulative-imbalance-cost}
    \end{subfigure}
    \hfill
    \begin{subfigure}{0.325\linewidth}
        \centering
        \includegraphics[width=\textwidth]{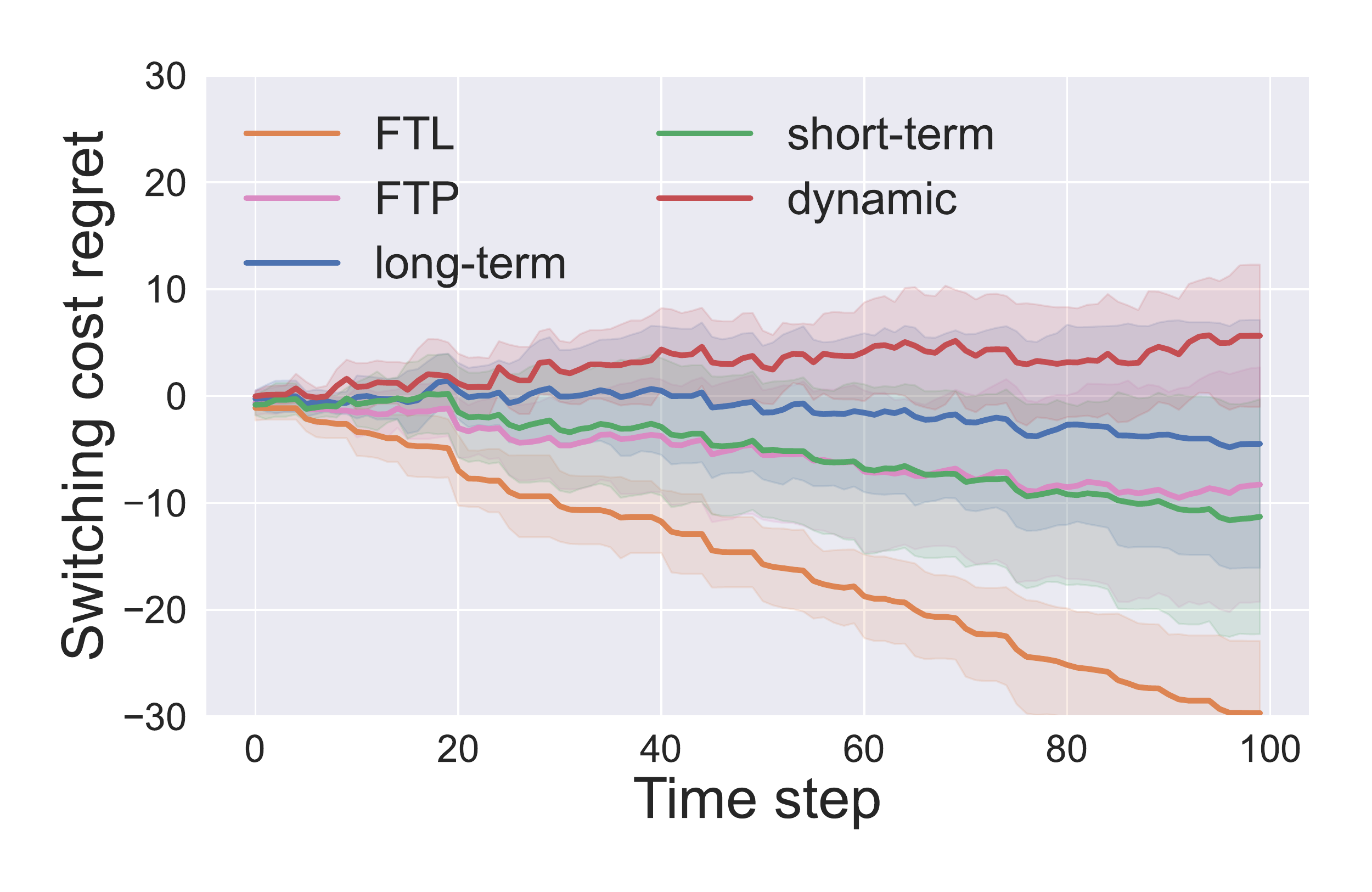}
        \caption{Average cumulative switching cost regret compared to the offline benchmark.}
        \label{fig:cumulative-switching-cst}
    \end{subfigure}
    \caption{We compare the performance of our approaches with various baselines without using predictions. The first takeaway is that methods using predictions largely outperform the methods without using predictions in Fig.~\ref{fig:cumulative-regret}. Secondly, choosing the right planning window can achieve a better imbalance cost in  Fig.~\ref{fig:cumulative-imbalance-cost} with a small increase in the amount of switching cost in  Fig.~\ref{fig:cumulative-switching-cst}. All the algorithms are compared with an offline benchmark with full information. The shaded area refers to the region within first standard deviation.
    }
    \label{fig:cost-experiment}
\end{figure*}

\begin{figure}[t]
    \centering
    \begin{subfigure}{0.48\linewidth}
        \centering
        \includegraphics[width=\textwidth]{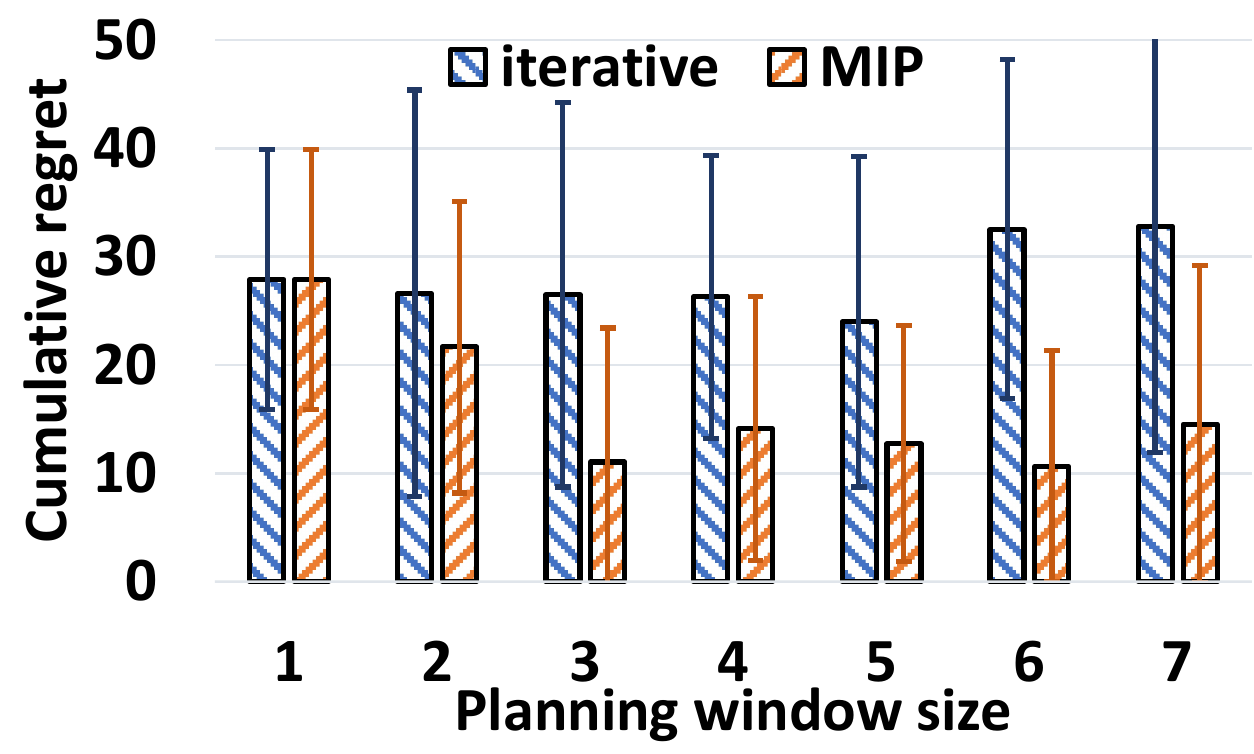}
        \caption{Cumulative regret of methods using different planning window sizes and different optimization approaches.}
        \label{fig:changing-window-size}
    \end{subfigure}
    \hspace{1mm}
    \begin{subfigure}{0.48\linewidth}
        \centering
        \includegraphics[width=\textwidth]{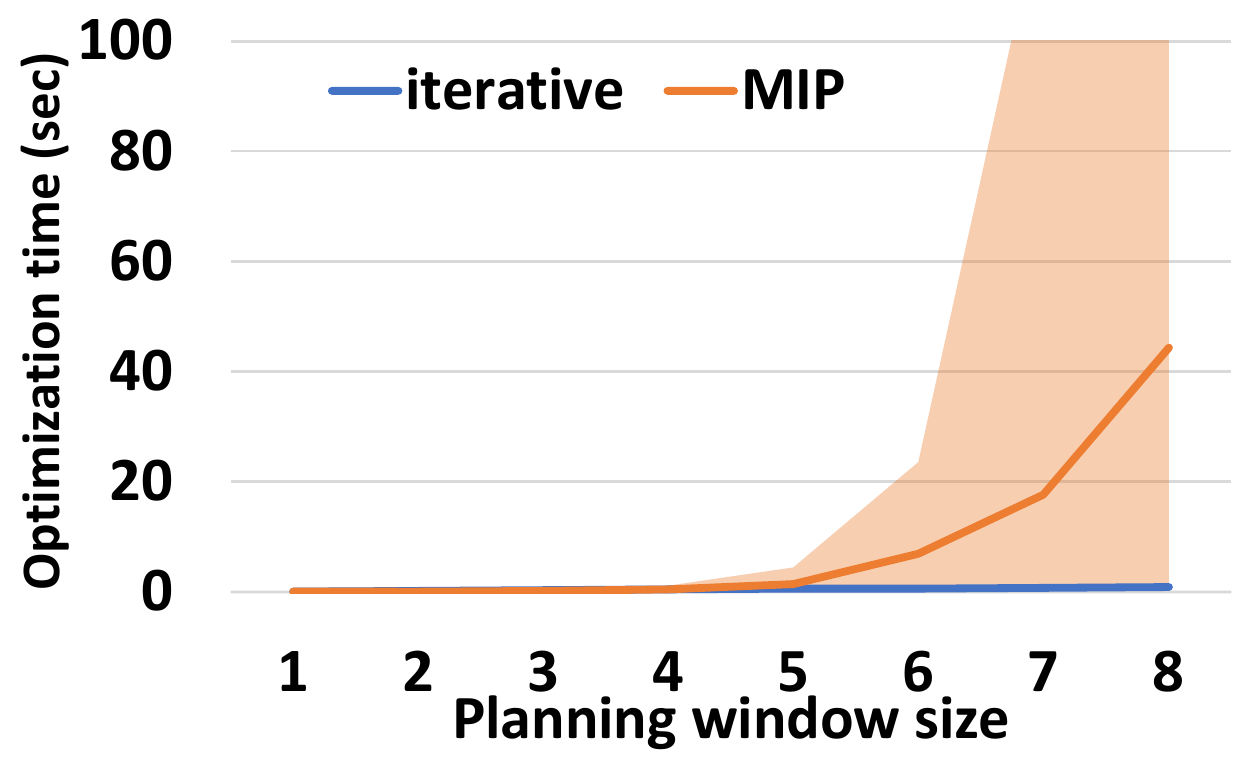}
        \caption{Average running time per optimization of different optimization methods with different planning window sizes.}
        \label{fig:runtime}
    \end{subfigure}
    % \hfill
    % \begin{subfigure}{0.325\linewidth}
    %     \centering
    %     \includegraphics[width=\textwidth]{example-image-c}
    %     \caption{Comparison between different scaling constant $c$ in Algorithm~\ref{alg:iterative-algorithm}. Larger constant implies more aggressive exploration.}
    %     \label{fig:changing-number-iterations}
    % \end{subfigure}
    \caption{Comparison of different methods of solving Eq.~\eqref{eqn:finite-optimization-problem} and different planning window sizes.}
    \label{fig:parameter-experiment}
\end{figure}
\section{Experimental Results}
We compare with our algorithm with baselines in the literature of online convex optimization:
\begin{itemize}
    \item The {\bf static} approach uses the initial assignment and never adjusts dynamically.
    \item The Online Gradient Descent ({\bf OGD}) updates the assignment by running gradient descent on the cost function received previously and project back to the discrete feasible region.
    \item The Follow-The-Leader ({\bf FTL}) aggregates all the cost functions received in the past and finds the optimal decision that optimizes the historical cost functions with switching cost.
    \item The Follow-The-Previous ({\bf FTP}) optimizes the cost function in the last time step.
    \item The {\bf short-term} algorithm and the {\bf long-term} algorithm both use predictions but with deterministic planning window sizes $1$ and $10$, respectively.
    \item The {\bf dynamic} algorithm refers to our algorithm using a dynamic planning window determined by the predictive uncertainty.
\end{itemize}

All the algorithms compare with an offline benchmark with full information. Since the offline problem is NP-hard to solve, we split the offline problem into chunks of size $5$ and solve each of them optimally using mixed integer program to get the offline performance.

\paragraph{Effect of predictions}
In Fig.~\ref{fig:cost-experiment}, we compare the performance of baselines (static, OGD, FTL, FTP) with approaches using predictions with different planning window sizes (short-term, long-term, dynamic).
We first notice that OGD and FTL perform worse than FTP, which simply follows the previous cost function to update solution. Due to the smoothness of the cost function parameters, optimizing over the previous cost function can be a strong baseline.

Secondly, the methods using predictions further improve the solution quality. Using predictions can help leverage the seasonality and trend information, and leave the uncertainty to the planning part. On the other hand, the OGD and the FTL algorithms are designed to deal with the case without predictable pattern and switching cost. The different purposes of algorithm design make our algorithm more applicable to our problem.

Lastly, in Fig.~\ref{fig:cumulative-regret}, we can see that the dynamic algorithm achieves the smallest cumulative regret compared to the short-term algorithm and the long-term algorithm using planning window with size $1$ and $10$, respectively. 
Fig.~\ref{fig:cumulative-imbalance-cost} and Fig.~\ref{fig:cumulative-switching-cst} further compare different performance metrics. We can see that our approach of choosing proper planning window can achieve much smaller server imbalance performance while requiring slightly more switching cost only.
Methods considering less future effect (FTL, FTP, short-term) can be reluctant to switch and underestimate the benefit of switching, which results in a smaller switching cost but larger imbalance cost.
In contrast, the long-term algorithm using larger planning window instead can be harmed by the increasing predictive uncertainty, which leads to incorrect planning decision due to the uncertainty. This result justifies the benefit of predictions and the right planning window to balance between uncertainty and the switching cost.

\paragraph{Effect of planning window size}
In Fig.~\ref{fig:changing-window-size}, we compare the performance of different choices of planning window size and different ways of solving the offline problem in Eq.~\eqref{eqn:finite-optimization-problem}.
First, if we use mixed integer program ({\bf MIP}), we can see a clear improvement by using a larger planning window and a slightly degraded performance after window size exceeds $3$. This empirical result matches to our analysis of shorter and longer planning windows, where the dynamic planning window suggests a planning window with size around $3$.
We also compare with an {\bf iterative} algorithm (Algorithm~\ref{alg:iterative-algorithm} in Appendix~\ref{sec:iterative-algorithm}) that is used to approximately solve the NP-hard offline problem in Eq.~\eqref{eqn:finite-optimization-problem}. The effect of planning window size is less significant due to the suboptimality of the iterative algorithm. But we can still see a similar benefit while using an appropriate planning window size.

Fig.~\ref{fig:runtime} compares the runtime of solving Eq.~\eqref{eqn:finite-optimization-problem} using different approaches and planning window sizes. Runtime of solving the optimization problem is important because decisions have to be made in real time.
We can see that MIP requires an exponentially increasing runtime because the combinatorial structure and the linearly increasing number of binary variables when the window size grows. On the other hand, the iterative algorithm solves the problem approximately and more efficiently. In short, the MIP algorithm achieves the best performance but with an expensive computation, while the iterative algorithm scales better but with a loss in the solution quality.

\section{Conclusion}
This paper studies the smoothed online combinatorial optimization problem with switching cost.
We show that when predictions with uncertainty are available, we can bound the dynamic regret by the convergence of the predictive uncertainty, which links the bound on dynamic regret to the predictability of the incoming cost function parameters. Our analysis suggests using a dynamic planning window dependent on the sequence of predictive uncertainties. Our dynamic planning window can optimize the regret, where we empirically show in our experiments that using a predictive model and an appropriate planning window can further improve the performance.

\bibliography{reference}

\newpage
\appendix
\onecolumn
\section*{Appendix}

\section{Computation Infrastructure}\label{sec:computation-infrastructure}
All the experiments were run on instances with 8 CPUs using 2nd generation Intel Xeon Platinum 8000 series processor (Skylake-SP or Cascade Lake) with a sustained all core Turbo CPU clock speed of up to 3.6 GHz. All algorithms do not require GPU to run. The implementation will be made available when accepted.

\section{Societal Impact}\label{sec:societal-impact}
The idea of smoothed online combinatorial optimization is not restricted to distributed streaming systems. Anything with a switching cost can be benefited from the study of smoothed online combinatorial optimization, including public policy with a switching cost~\cite{farrell2007coordination}, medication and wireless scheduling problems~\cite{krishnasamy2018augmenting}, where both of these can impact the process of policy making and scheduling algorithms.
Including the distributed streaming system problem, these are all applications of smoothed online combinatorial optimization that can lead to change of the current algorithm design in our daily life with impact to the society.

\section{Proofs of Theorem~\ref{thm:finite-proactive-planning} and Theorem~\ref{thm:dynamic-proactive-planning}}
\segmentedRegret*
\begin{proof}
For simplicity of the proof, we define function $g(x,y,z)$ as follows:
\begin{align*}
    g(x,y,z) := f(x,y) + d(x,z)
\end{align*}
which includes both the cost from the cost function $f$ and the switching cost $d$.

Let $\{ \decision'_{s} \}_{s \in \{t,t+1,\cdots, t+S-1 \}}$ be the optimal solutions when the full information of the cost function parameters $\{ \parameter_{s} \}_{s \in \{t,t+1,\cdots, t+S-1 \}}$ is given. Let $\{ \decision_{s} \}_{s \in \{t,t+1,\cdots, t+S-1 \}}$ be the optimal solutions using the predicted parameters $\{ \parameter_{s}^{(t)} \}_{s \in \{t,t+1,\cdots, t+S-1 \}}$. Without loss of generality, we let $\decision'_{t-1} = \decision_{t-1}$ to be the same initial decision at the time step $t-1$.
We have:
\begin{align}
    & \text{Reg}_{t}^{t+S-1}(\decision_{t-1}) \nonumber \\
    = & \left( \sum\limits_{s=t}^{t+S-1} g(\decision_{s}, \parameter_{s} , \decision_{s-1}) - g(\decision'_{s}, \parameter_{s} , \decision'_{s-1}) \right) \nonumber \\
    = & \left( \sum\limits_{s = t}^{t+S-1} g(\decision_{s}, \parameter_{s}, \decision_{s-1}) - g(\decision_{s}, \parameter^{(t)}_{s}, \decision_{s-1}) \right) + \left( \sum\limits_{s = t}^{t+S-1} g(\decision_{s}, \parameter^{(t)}_{s}, \decision_{s-1}) - g(\decision'_{s}, \parameter^{(t)}_{s}, \decision'_{s-1}) \right) \nonumber \\
    & \quad \quad \quad \quad \quad + \left( \sum\limits_{s = t}^{t+S-1} g(\decision'_{s}, \parameter^{(t)}_{s} , \decision'_{s-1}) - g(\decision'_{s}, \parameter_{s}, \decision'_{s-1}) \right) \label{eqn:telescoping-sum} \\
    \leq & \sum\limits_{s = t}^{t+S-1} L \norm{\parameter_{s} - \parameter^{(t)}_{s}} + 0 + \sum\limits_{s = t}^{t+S-1} L \norm{ \parameter^{(t)}_{s} - \parameter_{s} } \nonumber \\
    = & 2 L \sum\limits_{s = t}^{t+S-1} \norm{\parameter_{s} - \parameter^{(t)}_{s}} \nonumber \\
    \leq & 2 L \sum\limits_{s = t}^{t+S-1} \error^{(t)}_{s}  \nonumber
\end{align}
The first term in Eq.~\eqref{eqn:telescoping-sum} can be bounded by (similar the third term):
\begin{align*}
    g(\decision_s, \parameter_s, \decision_{s-1}) - g(\decision_s, \parameter_s^{(t)}, \decision_{s-1}) & = f(\decision_s, \parameter_s, \decision_{s-1}) + d(\decision_s, \decision_{s-1}) - f(\decision_s, \parameter_s^{(t)}, \decision_{s-1}) - d(\decision_s, \decision_{s-1}) \\
    & = f(\decision_s, \parameter_s, \decision_{s-1}) - f(\decision_s, \parameter_s^{(t)}, \decision_{s-1}) \leq L \norm{\parameter_s - \parameter_s^{(t)}}
\end{align*}
The second term in Eq.~\eqref{eqn:telescoping-sum} is non-positive because the optimality of the sequence $\{ \decision_{s} \}_{s \in \{t,t+1,\cdots, t+S-1 \}}$ when using the predictions as the parameters, i.e.,
\begin{align}
    \sum\limits_{s = t}^{t+S-1} g(\decision_{s}, \parameter^{(t)}_{s}, \decision_{s-1}) \leq \sum\limits_{s = t}^{t+S-1} g(\decision^*_{s}, \parameter^{(t)}_{s}, \decision^*_{s-1}) \nonumber
\end{align}

\end{proof}

\totalRegret*

\begin{proof}
In the offline setting, given all the traffic up to time $T$, we can solve the optimization problem in Eq.~\eqref{eqn:finite-optimization-problem} to get the optimal solution $\decision^*$. We use $\text{cost}(\decision^*, \parameter)$ to denote the optimal offline cost.

On the other hand, we assume that Algorithm~\ref{alg:infinite-proactive-planning} runs with $I$ restarts and each restart runs $S_i$ time steps using the predictions to plan ahead for each $i \in [I]$.
Let $T_i = \sum _{j=1}^{i-1} S_j + 1$ be the start time of the $i$-th planning window.
We can split the decisions into chunks --- $\{ \decision_{T_i + s} \}_{s \in \{0, 1, \cdots, S_i-1\} }$ for each $i \in [I]$ that correspond to the decisions obtained in the $i$-th planning window.

Now we would like to compare the cost of the offline optimal solution $\{ \decision^*_t \}_{t \in [T]}$ with the online solution $\{ \decision_t \}_{t \in [T]}$ within the $i$-th chunk $\{T_i, T_i + 1, \cdots, T_i + S_i - 1 \}$.
Since the initial point $\decision^*_{T_i-1}$ of the offline optimal solution and the initial point $\decision_{T_i-1}$ of the online solution are different, we cannot directly apply the result in Theorem~\ref{thm:finite-proactive-planning} to bound the regret.

To resolve the misalignment, we additionally define $\{ \decision'_t \}_{t \in \{T_i, T_i + 1, \cdots, T_i + S_i - 1 \}}$ to be a {\it new} offline optimal solution starting from $T_i$ till $T_i + S_i -1$ with $\decision'_{T_i-1} = \decision_{T_i-1}$ being the initial point. $\decision'_t$ serves as an intermediate to link $\decision^*_t$ and $\decision_t$. Compare to this new offline solution with the same initial decision, the corresponding regret becomes:
\begin{align}
    \text{Reg}_{T_i}^{T_i+S_i-1} = \text{Reg}_{T_i}^{T_i+S_i-1}(\decision_{T_i-1}) \coloneqq \sum\limits_{t = T_i}^{T_i+S_i-1} \left( f(\decision_t, \parameter_t) + d(\decision_t, \decision_{t-1}) \right) - \sum\limits_{t = T_i}^{T_i+S_i-1} \left( f(\decision'_t, \parameter_t) + d(\decision'_t, \decision'_{t-1}) \right)
    \label{eqn:new-regret-definition}
\end{align}

Therefore, we can write:
\begin{align}
    & \sum\limits_{t = T_i}^{T_i+S_i-1} \left( f(\decision_t, \parameter_t) + d(\decision_t, \decision_{t-1}) \right) \label{eqn:online-total-cost} \\
    = ~& \text{Reg}_{T_i}^{T_i+S_i-1} + \sum\limits_{t = T_i}^{T_i+S_i-1} \left( f(\decision'_t, \parameter_t) + d(\decision'_t, \decision'_{t-1}) \right) \label{eqn:partial-offline-optimal-full-information} \\
    \leq ~& \text{Reg}_{T_i}^{T_i+S_i-1} + f(\decision^*_{T_i}, \parameter_{T_i}) + d(\decision^*_{T_i}, \decision_{T_i-1}) + \sum\limits_{t = T_i+1}^{T_i+S_i-1} \left( f(\decision^*_t, \parameter_t) + d(\decision^*_t, \decision^*_{t-1}) \right) \label{eqn:initialized-global-optimal-solution-chunk} \\
    \leq ~& \text{Reg}_{T_i}^{T_i+S_i-1} + B + f(\decision^*_{T_i}, \parameter_{T_i}) + d(\decision^*_{T_i}, \decision^*_{T_i-1}) + \sum\limits_{t = T_i+1}^{T_i+S_i-1} \left( f(\decision^*_t, \parameter_t) + d(\decision^*_t, \decision^*_{t-1}) \right) \label{eqn:initialized-global-optimal-solution-chunk-different} \\
    = ~& \text{Reg}_{T_i}^{T_i+S_i-1} + B + \sum\limits_{t = T_i}^{T_i+S_i-1} \left( f(\decision^*_t, \parameter_t) + d(\decision^*_t, \decision^*_{t-1}) \right) \label{eqn:initialized-global-optimal-solution-chunk-final}
\end{align}
First, from Eq.~\eqref{eqn:online-total-cost} to Eq.~\eqref{eqn:partial-offline-optimal-full-information} is by the definition of $\text{Reg}_{T_i}^{T_i+S_i-1}$ in Eq.~\eqref{eqn:new-regret-definition}.
Second, Eq.~\eqref{eqn:partial-offline-optimal-full-information} to Eq.~\eqref{eqn:initialized-global-optimal-solution-chunk} is due to the optimality of $\decision'_t$:
\begin{align*}
    \{ \decision'_t \}_{t \in \{T_i, T_i + 1, \cdots, T_i + S_i - 1 \}} = \arg\min_{\boldsymbol y} \sum\limits_{t = T_i}^{T_i+S_i-1} \left( f(\boldsymbol y_t, \parameter_t) + d(\boldsymbol y_t, \boldsymbol y_{t-1}) \right), \text{ where } \boldsymbol y_{T_i-1} = \decision_{T_i-1}
\end{align*}
Therefore, plugging in the original optimal solution $\decision^*$ results in a larger cost in Eq.~\eqref{eqn:initialized-global-optimal-solution-chunk}.

Lastly, Eq.~\eqref{eqn:initialized-global-optimal-solution-chunk} and Eq.~\eqref{eqn:initialized-global-optimal-solution-chunk-different} only differ by the initial point at time step $T_i$, where  Eq.~\eqref{eqn:initialized-global-optimal-solution-chunk} uses $\decision_{T_i-1}$ and Eq.~\eqref{eqn:initialized-global-optimal-solution-chunk-different} uses $\decision^*_{T_i-1}$. Thus the difference is bounded by the maximal switching cost $B$.

We can reorganize the inequality in Eq.~\eqref{eqn:initialized-global-optimal-solution-chunk-final} to get:
\begin{align*}
    & \sum _{t = T_i}^{T_i+S_i-1} \left( f(\decision_t, \parameter_t) + d(\decision_t, \decision_{t-1}) \right) - \sum _{t = T_i}^{T_i+S_i-1} \left( f(\decision^*_t, \parameter_t) + d(\decision^*_t, \decision^*_{t-1}) \right) \nonumber \\
    \leq ~& \text{Reg}_{T_i}^{T_i+S_i-1}(\decision_{T_i-1}) + B \nonumber \\
    \leq ~& 2 L \sum _{s=T_i}^{T_i+S_i-1} \error_s^{(t)} + B \nonumber \\
    = ~& 2B
\end{align*}
where the last inequality is by the choice of the dynamic planning window $S_i$ such that $2 L \sum _{s=T_i}^{T_i+S_i-1} \error_s^{(t)} \leq B$.
Lastly, we can take summation over all the $i \in [I]$ to get:
\begin{align}
    \text{Reg}_T &= \sum\limits_{t = 1}^{T} \left( f(\decision_t, \parameter_t) + d(\decision_t, \decision_{t-1}) \right) - \sum\limits_{t = 1}^{T} \left( f(\decision^*_t, \parameter_t) + d(\decision^*_t, \decision^*_{t-1}) \right) \nonumber \\
    & = \sum\limits_{i=1}^I \left( \sum\limits_{t = T_i}^{T_i+S_i-1} \left( f(\decision_t, \parameter_t) + d(\decision_t, \decision_{t-1}) \right) - \sum\limits_{t = T_i}^{T_i+S_i-1} \left( f(\decision^*_t, \parameter_t) + d(\decision^*_t, \decision^*_{t-1}) \right) \right) \nonumber \\
    & \leq \sum\limits_{i=1}^{I} 2B \nonumber \\
    & = 2BI \nonumber
\end{align}
\end{proof}

\section{Proof of Corollary~\ref{coro:regret-bound}}
\regretBoundPoly*
To prove Corollary~\ref{coro:regret-bound}, we need the following lemmas:

\begin{lemma}\label{lemma:recursion}
Given any fixed $0 \leq \alpha$ and the following recursive formula:
\begin{align*}
    T_1 = & ~ 1 \\
    T_{i+1} \geq & ~ T_{i} + A \cdot T_{i}^\alpha, \forall i \geq 1 .
\end{align*}
%$T_{i+1} = T_{i} + A T_{i}^\alpha$ with $T_1 = 1$ is bounded below by 
We can show:
\begin{align*}
    T_{i} \geq \begin{cases}
    c \cdot i^{\beta} & \text{if $\alpha < 1$} \\
    (A+1)^{i-1} & \text{if $\alpha = 1$} \\
    (A+1)^{\alpha^{(i-2)}} & \text{if $\alpha > 1$}
    \end{cases}
\end{align*}
where $\beta = \frac{1}{1-\alpha}$ if $\alpha < 1$. The constant $c \in \R_{\geq 0}$ satisfies $c \leq (\frac{e^\beta}{A})^{1 - \alpha} = e A^{\alpha - 1}$ and $c \leq 1$.
\end{lemma}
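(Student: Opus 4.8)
The plan is to treat the three regimes $\alpha=1$, $\alpha>1$, $\alpha<1$ separately. First a simplification used throughout: since $t\mapsto t+A\,t^\alpha$ is increasing, replacing $T_i$ by the sequence defined by \emph{equality} in the recursion only decreases it, so it suffices to prove the bounds when $T_1=1$ and $T_{i+1}=T_i+A\,T_i^\alpha$. This sequence is nondecreasing, so $T_i\ge1$ for all $i$; it is unbounded (as $A>0$ gives $T_{i+1}-T_i\ge A$), so $T_i\to\infty$; and $T_{i+1}/T_i=1+A\,T_i^{\alpha-1}$. The case $\alpha=1$ is then immediate: $T_{i+1}=(1+A)T_i$, hence $T_i=(1+A)^{i-1}$.

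For $\alpha>1$ I would pass to logarithms. Dropping the nonnegative linear term gives $T_{i+1}\ge A\,T_i^\alpha$, so $u_i:=\log T_i$ satisfies $u_{i+1}\ge\alpha u_i+\log A$, an affine recursion with fixed point $u^\star=\tfrac{\log A}{1-\alpha}$. Once $u_i\ge u^\star$, the gap obeys $u_{i+1}-u^\star\ge\alpha(u_i-u^\star)$, so it is multiplied by at least $\alpha>1$ each step: $u_i$ grows like $\alpha^{\,i}$ and $T_i$ is doubly exponential. Since $T_i\to\infty$, $u_i$ clears $u^\star$ after finitely many steps; when $A\ge1$ this is immediate ($u^\star\le0\le u_2$, using $T_2\ge1+A$), and then the clean bound $T_i\ge(A+1)^{\alpha^{\,i-2}}$ follows by induction from $i=2$ via $T_{i+1}\ge A\,T_i^\alpha\ge T_i^\alpha$. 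For $A<1$ one absorbs the bounded warm-up into the constant; the corollary only needs the doubly-exponential rate, so this costs nothing.

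The case $\alpha<1$ carries the leading-order regret term and is the crux. The right quantity is $T_i^{1-\alpha}$: by the mean value theorem and monotonicity of $t\mapsto(1-\alpha)t^{-\alpha}$, $T_{i+1}^{1-\alpha}-T_i^{1-\alpha}\ge(1-\alpha)T_{i+1}^{-\alpha}(T_{i+1}-T_i)=(1-\alpha)A\,(T_i/T_{i+1})^{\alpha}\ge(1-\alpha)A\,(1+A)^{-\alpha}$, where I used $T_{i+1}/T_i=1+A\,T_i^{\alpha-1}\le1+A$ (since $T_i\ge1$ and $\alpha-1<0$). Telescoping from $T_1=1$ gives $T_i^{1-\alpha}\ge1+(i-1)(1-\alpha)A(1+A)^{-\alpha}\ge c'\,i$ for a suitable $c'\in(0,1]$, hence $T_i\ge(c')^{1/(1-\alpha)}\,i^{1/(1-\alpha)}$: exactly the claimed $c\,i^\beta$ with $\beta=\tfrac1{1-\alpha}$. (The same exponent also drops out of a direct induction $T_i\ge c\,i^\beta$: the identity $\alpha\beta=\beta-1$ makes the step reduce to $i^{\beta-1}(i+A\,c^{\alpha-1})\ge(i+1)^\beta$, which is tightest at $i=1$ and hence fixes an admissible $c$; the base case $T_1=1\ge c$ forces $c\le1$.)

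The one genuinely delicate step is producing, in the $\alpha<1$ case, a closed-form constant $c$ valid uniformly down to $i=1$ rather than only asymptotically — which is why the admissible $c$ has the somewhat awkward shape built from $A$, $\alpha$, and $\beta$ above. Everything else — the exponent $\beta=\tfrac1{1-\alpha}$, the doubly-exponential growth when $\alpha>1$, and the geometric growth when $\alpha=1$ — follows cleanly from the monotonicity reduction together with telescoping $T_i^{1-\alpha}$ or $\log T_i$.
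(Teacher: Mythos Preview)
Your proof is correct, and for $\alpha=1$ and $\alpha>1$ it essentially coincides with the paper's (you are in fact more careful about the $\alpha>1$ case when $A<1$; the paper simply writes $T_{i+1}\ge T_i^\alpha$, which requires $T_i+A T_i^\alpha\ge T_i^\alpha$ and hence implicitly $A\ge1$). For $\alpha<1$, however, you take a genuinely different route. The paper proves $T_i\ge c\,i^\beta$ by direct induction, and the inductive step is driven by a separate technical inequality, $x^k+e^k x^{k-1}\ge(x+1)^k$ for $x\ge1$, $k\ge1$, which is stated and proved as its own lemma. Your telescoping of $T_i^{1-\alpha}$ via the mean value theorem, together with the ratio bound $T_{i+1}/T_i\le 1+A$, sidesteps that auxiliary lemma entirely and is cleaner. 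Your parenthetical on the direct induction is also sound: because $(1+x)^\beta$ is convex, the secant slope $((1+x)^\beta-1)/x$ is increasing in $x\in(0,1]$, so the constraint is indeed tightest at $i=1$, yielding the threshold $A c^{\alpha-1}\ge 2^\beta-1$, which is sharper than the paper's $e^\beta$. The only cost of your approach is that the explicit constant you obtain differs from the $c=\min(1,\,eA^{\alpha-1})$ stated in the lemma; since the downstream corollary uses only the growth order $i^\beta$, this is immaterial for the paper's purposes, but if you needed the lemma verbatim you would have to compare constants.
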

We prove three different cases separately.
\begin{itemize}
    \item {\bf Case 1 ($\alpha < 1$)} (this is deferred to the end).
    \item {\bf Case 2 ($\alpha = 1$)}.
    \item {\bf Case 3 ($\alpha > 1$)}.
\end{itemize}

\begin{proof}[Proof of Case 2]
    The recursive formula reduces to $T_{i+1} \geq (A+1) T_i$, where we can easily show that $T_i \geq (A+1)^{i-1}$.
\end{proof}

\begin{proof}[Proof of Case 3]
    The recursive formula can be written as $T_{i+1} \geq T_i^\alpha$ and $T_2 \geq A+1$. Thus we can simply unroll the recursion to get
\begin{align}
    T_i \geq T_{i-1}^\alpha \geq T_{i-2}^{\alpha^2} \geq \cdots \geq T_2^{\alpha^{(i-2)}} = (A+1)^{\alpha^{(i-2)}} \nonumber
\end{align}
\end{proof}

\begin{proof}[Proof of Case 1]
We prove by induction.

{\bf Base case: }
Since $c \leq 1$, the base case is automatically satisfied by $1 = T_1 \geq c = c \cdot 1^\beta$.

{\bf Inductive step:}
By induction, assume $T_i \geq c \cdot i^\beta$. By our choice of $c$, we can see that $A c^{\alpha-1} \geq e^\beta$, which implies:
\begin{align}\label{eq:rewrite_Aci}
     A c^{\alpha-1} i^{\alpha \beta} \geq e^\beta \cdot i^{\alpha \beta} = e^\beta \cdot i^{\beta-1} 
\end{align}
where the second step follows from $\alpha \beta = \beta-1$.

Therefore, we can lower bound $T_{i+1}$ by:
\begin{align*}
    T_{i+1} 
    \geq & ~ T_i + A T_i^\alpha \\
    \geq & ~ (c \cdot i^{\beta}) + A (c \cdot i^{\beta})^{\alpha} & \text{~by~} T_i \geq c i^{\beta}\\
    = & ~ c \cdot ( i^\beta + A c^{\alpha-1} i^{\alpha \beta} )  \\
    \geq & ~ c \cdot ( i^\beta + e^\beta i^{\beta - 1} ) & \text{~by~Eq.~\eqref{eq:rewrite_Aci}} \\
    \geq & ~ c \cdot (i+1)^\beta, & \text{~by~Lemma~\ref{lem:binomial}}
\end{align*}
where we can apply Lemma~\ref{lem:binomial} because $\beta = \frac{1}{1 - \alpha} \geq 1$ for all $\alpha \in [0,1)$.
\end{proof}

\begin{lemma}\label{lem:binomial}
\begin{align}
    x^k + e^k x^{k-1} \geq (x+1)^k \quad ~\forall x \geq 1, k \geq 1
\end{align}
\end{lemma}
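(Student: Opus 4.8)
\textbf{Proof proposal for Lemma~\ref{lem:binomial}.} The plan is to divide through by $x^k$ and reduce the claim to a one-variable inequality on a bounded interval. Setting $u = 1/x$, since $x \geq 1$ we have $u \in (0,1]$, and the desired inequality $x^k + e^k x^{k-1} \geq (x+1)^k$ is equivalent (after dividing by $x^k > 0$) to
\begin{align*}
    (1+u)^k \leq 1 + e^k u, \qquad u \in (0,1],\ k \geq 1.
\end{align*}
So it suffices to establish this reduced form.

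The next step is to chain two standard bounds. First, the elementary inequality $1 + t \leq e^{t}$ applied with $t = u$ and raised to the $k$-th power gives $(1+u)^k \leq e^{ku}$. Second, I would invoke convexity of the map $t \mapsto e^{kt}$ on the interval $[0,1]$: for $u \in [0,1]$, writing $u = (1-u)\cdot 0 + u \cdot 1$ as a convex combination yields
\begin{align*}
    e^{ku} \leq (1-u)\,e^{0} + u\,e^{k} = 1 + (e^k - 1)\,u \leq 1 + e^k u,
\end{align*}
where the last step just drops the $-u \le 0$ term. Combining the two displays gives $(1+u)^k \leq e^{ku} \leq 1 + e^k u$, which is exactly the reduced inequality; multiplying back by $x^k$ recovers the statement since $x^k \cdot e^k u = e^k x^{k-1}$.

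There is essentially no serious obstacle here: the only points needing a line of care are that $u = 1/x$ indeed lands in $(0,1]$ precisely because $x \ge 1$ (this is where the hypothesis $x \ge 1$ is used — the chord bound for $e^{kt}$ is only valid on $[0,1]$), and that $k \ge 1$ is harmless since every step works for all real $k \ge 0$. I would present the reduction, then the two-step chaining, and conclude. If one prefers to avoid invoking convexity as a black box, the inequality $e^{ku} \le 1 + (e^k-1)u$ on $[0,1]$ can alternatively be verified by noting both sides agree at $u=0$ and $u=1$ and the left side is convex while the right is affine; either justification is a single sentence.
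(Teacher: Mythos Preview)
Your argument is correct and considerably cleaner than the paper's. The paper proceeds by defining $g(x,k) = x^k + e^k x^{k-1} - (x+1)^k$, verifying $g(x,1) > 0$ directly, then computing $\partial g / \partial k$ and showing it is positive whenever $g(x,k) \geq 0$; a contradiction argument via the mean value theorem then rules out any $k > 1$ with $g(x,k) < 0$. This requires a derivative calculation, the auxiliary estimate $(ex)^k > (x+1)^k$, and a somewhat delicate infimum/continuity argument. Your route---substituting $u = 1/x$ and chaining $(1+u)^k \leq e^{ku} \leq 1 + (e^k - 1)u$ via the standard bound $1+u \leq e^u$ and convexity of $t \mapsto e^{kt}$ on $[0,1]$---avoids all of this and makes transparent exactly where the hypothesis $x \geq 1$ enters (to place $u$ in $[0,1]$ so the chord bound applies). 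As you note, your proof in fact works for every real $k \geq 0$, whereas the paper's induction-style argument genuinely uses the base case $k=1$.
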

\begin{proof}
Define a function $g(x,k) = x^k + e^k x^{k-1} - (x+1)^k$. We can check that $g(x,1) = x + e - (x+1) > 0$. Next, we show that $g(x,k)$ is an increasing function in $k$ when $x \geq 1$. Notice that the derivative $\frac{d g}{d k}$ can be written as:
\begin{align}
    \frac{d g}{d k} \mid_{x,k} = & \log x \cdot x^k + e^k x^{k-1} + \log x \cdot e^k x^{k-1} - \log (x+1) \cdot (x+1)^k \nonumber \\
    = & \log x \cdot x^k + e^k x^{k-1} + \log x \cdot e^k x^{k-1} - \log x \cdot (x+1)^k - \log (\frac{1+x}{x}) \cdot (x+1)^k \nonumber \\
    = & \log x \cdot (x^k + e^k x^{k-1} - (x+1)^k) + e^k x^{k-1} - \log (1 + \frac{1}{x}) \cdot (x+1)^k \nonumber \\
    \geq & \log x \cdot g(x,k) + \left( e^k x^{k-1} - \frac{1}{x} \cdot (x+1)^k \right) \label{eqn:binomial-proof-derivative-bound}
\end{align}
where the last inequality is due to $\log (1 + \frac{1}{x}) \leq \frac{1}{x}$.

The second term in Eq.~\eqref{eqn:binomial-proof-derivative-bound} can be written as:
\begin{align}
    & e^k x^{k-1} - \frac{1}{x} \cdot (x+1)^k = \frac{1}{x} \left( (ex)^{k} - (x+1)^k \right) > 0 \label{eqn:binomial-proof-derivative-bound-second-term}
\end{align}
which is always satisfied because $ex > x+1 ~\forall x \geq 1$.

Therefore, Eq.~\eqref{eqn:binomial-proof-derivative-bound} and Eq.~\eqref{eqn:binomial-proof-derivative-bound-second-term} together guarantee that if the value $g(x,k) \geq 0$, then its derivative is positive $\frac{d g}{d k} \mid_{x,k} > 0$ because every term in Eq.~\eqref{eqn:binomial-proof-derivative-bound} is positive.
So now we have $g(x,1) > 0$ and the derivative $\frac{d g}{d k} \mid_{x,k} > 0$ if $g(x,k) \geq 0$.

Lastly, we just need to ensure that the function is always non-negative.
Given fixed $x$, define $U = \{ k > 1 \mid g(x,k) < 0 \}$.
We will prove by contradiction by assuming $U$ is non-empty.
Given that $U$ is not empty, we can choose $u = \inf \{k: k \in U \} $. By the continuity of function $g$, $g(x,u) \leq 0$. Since $g(x,1) > 0$ and the continuity of $g$, we can find $g(x, 1 + \epsilon) > 0$ for all $\epsilon \in B(0,r)$ in a small open ball. Thus $u \geq 1 + \epsilon > 1$. Now by the mean value theorem applied on $g(x,1) > 0$ and $g(x,u) \leq 0$, we can find a value $v \in (1,u)$ such that $g(x,v) = \frac{g(x,u) - g(x,1)}{u - 1} < 0$. However, we have proven that if $g(x,k) \geq 0$ then we know $\frac{d g}{d k} \mid_{x,k} > 0$. Since we have $g(x,v) < 0$, this implies $g(x,v) < 0$ as well with $v \in (1,u)$ and thus $v \in U$, which contradicts to the definition of $u$, i.e., the infimum of the set $U$.
Thus the assumption that $U$ is non-empty is incorrect. We conclude that $U$ is empty. Thus for any given $x$, we have $g(x,k) \geq 0$ for all $k$, which implies the original inequality.

% Since $g(x,1) > 0$ for all $x \geq 1$ and $\frac{d g}{d k} \mid_{x,k} > 0$ if $g(x,k) \geq 0$, we have $\frac{d g}{d k} \mid_{x,k} > 0$ and thus $g(x,k)$ is monotone with $g(x,k) > 0$ for all $x \geq 1, k \geq 1$.
% Now, if there exists a $k > 1$ such that $g(x,k) < 0$, since $g(x,k)$ is continuous, there exists a $1 \leq k' < k$ such that $g(x,k') = 0$ and
\end{proof}

% \begin{fact}
% It is known that for all positive integers $n,k$ succh that $1 \leq k \leq n$, we have
% \begin{align*}
%     \frac{n^k}{k^k} \leq {n \choose k} \leq \frac{n^k}{k!} \leq ( \frac{n e}{k} )^k.
% \end{align*}
% \end{fact}

% \begin{lemma}\label{lem:binomial}
% For any $x \geq n$ and integer $n\geq 2$, we have 
% \begin{align*}
%     (n-1) \cdot x^{n-1} \geq \sum_{k=0}^{n-2} {n \choose k} \cdot x^k.
% \end{align*}

% \end{lemma}
% \begin{proof}
% First, we can show the monotonicity, which is that for any $x \geq n$,
% \begin{align*}
%     {n \choose (k-1)} x^{k-1} \leq {n \choose k} x^k , ~~~ \forall k \in \{1, \cdots, n-1\}.
% \end{align*}
% The statement is equivalent to
% \begin{align*}
% 1\leq \frac{n-k+1}{k} x
% \end{align*}
% Thus, as long as $x \geq n$, then it's true.

% Next, we need to prove that
% \begin{align*}
%     x^{n-1} \geq {n \choose (n-2)} \cdot x^{n-2}
% \end{align*}
% which is equivalent to
% \begin{align*}
%     x \geq n(n-1)/2
% \end{align*}
% Thus,
% \begin{align*}
%     \sum_{k=0}^{n-2} {n \choose k} x^k 
%     \leq & ~ (n-1) \cdot {n \choose (n-2)} \cdot x^{n-2} \\
%     \leq & ~ (n-1) \cdot x^{n-1}
% \end{align*}
% \end{proof}

Now we are ready to prove Corollary~\ref{coro:regret-bound}.
\begin{proof}[Proof of Corollary~\ref{coro:regret-bound}]
First, let $S_i$ denote the size of the $i$-th planning window in Algorithm~\ref{alg:infinite-proactive-planning} for each $i \in [I]$.
Let $T_i = \sum _{j=1}^{i-1} S_j + 1$ denote the start time of the $i$-th planning part.

In the $i$-th iteration of Algorithm~\ref{alg:infinite-proactive-planning} starting at time $T_i$, $S_i$ is chosen such that $S_i$ is the largest integer\footnote{We need $B \geq \error_{T_i}^{(T_i)}$ to ensure that we can at least choose $S_i \geq 1$. In the extreme case where $B < \error_{T_i}^{(T_i)}$, it implies that the uncertainty is too large while the switching cost is relatively small. Thus it is ideal to re-plan every time step because switching is cheap. The analysis of balancing switching cost and future planning does not apply.} satisfying $2 L \sum\limits_{s=T_i}^{T_i+S_i-1} \error_s^{(T_i)} \leq B$.
This implies $2 L \sum\limits_{s=T_i}^{T_i+S_i} \error_s^{(T_i)} > B$ and we can estimate $S_i$ by:
\begin{align}
    B & < 2L \sum\limits_{s=1}^{S_i+1} \epsilon_{T_i+s-1}^{(T_i)} \leq 2L D \sum\limits_{s=1}^{S_i+1} \frac{s^a}{T_i^b} \leq \frac{2LD}{a+1} \frac{(S_i+2)^{a+1}}{T_i^b} \nonumber
\end{align}
for some constant $D > 0$. This suggests:
\begin{align*} 
    & (\frac{a+1}{2D})^{\frac{1}{a+1}} (\frac{B}{L})^{\frac{1}{a+1}} T_i^{\frac{b}{a+1}} \leq S_i + 2 \leq 3 S_i, & A T_i^{\frac{b}{a+1}} \leq S_i
\end{align*}
where $A = \frac{1}{3} (\frac{a+1}{2D})^{\frac{1}{a+1}} (\frac{B}{L})^{\frac{1}{a+1}} = \Theta((\frac{B}{L})^{\frac{1}{a+1}})$ is a constant dependent on the maximal switching cost $B$ and the Lipschitzness $L$.

Therefore, we have
\begin{align}
    T_1 = 1, \quad T_{i+1} = T_i + S_i \geq T_i + A T_i^{\frac{b}{a+1}} \nonumber
\end{align}
where Lemma~\ref{lemma:recursion} can be applied to get:
\begin{align*}
    T_i &\geq \begin{cases}
    c i^{\frac{a+1}{a+1-b}} & \text{if $b < a + 1$} \\
    (A+1)^{i-1} & \text{if $b = a + 1$} \\
    (A+1)^{\frac{b}{a+1}^{(i-2)}} & \text{if $b > a + 1$}
    \end{cases}
\end{align*}
with the choice of the constant $c = \min(1, e A^{\frac{a+1-b}{a+1}})$.
Lastly, since $T_{I} \leq T$, we can bound the total iteration $I$ by:
\begin{align*}
    T \geq T_{I} \geq \begin{cases}
    c I^{\frac{a+1}{a-b+1}} & \text{if $b < a+1$} \\
    (A+1)^{I-1} & \text{if $b=a+1$} \\
    (A+1)^{\frac{b}{a+1}^{(I-2)}} & \text{if $b > a + 1$}
    \end{cases}
\end{align*}
which gives: %\Zhao{Is the exponent for $T/c$ or just for $T$?} \Kai{for $T/c$.}
\begin{align*}
    I \leq & \begin{cases}
    (\frac{T}{c})^{\frac{a-b+1}{a+1}} \\
    \log_{A+1} T + 1 \\
    \log_{A+1} \log_{\frac{b}{a+1}} T + 2
    \end{cases}
    = \begin{cases}
    O(T^{1 - \frac{b}{a+1}}) & \text{if $b < a+1$} \\
    O(\log T) & \text{if $b = a+1$} \\
    O(\log \log T) & \text{if $b > a+1$}
    \end{cases} \\
\end{align*}

% &= (\frac{T}{c})^{\frac{a-b+1}{a+1}}

% with $c = O(n^{\frac{1}{a+1}})^{(1 - \frac{b}{a+1})} = O(n^{\frac{a-b+1}{(a+1)^2}})$. The cumulative regret can be written as:

By applying Theorem~\ref{thm:dynamic-proactive-planning} and substituting the total number of iterations $I$ by the above inequality, we get:
\begin{align*}
    \text{Reg}_T & \leq \Theta(BI) \nonumber = \begin{cases}
    O(T^{1 - \frac{b}{a+1}}) & \text{if $b < a+1$} \\
    O(\log T) & \text{if $b = a+1$} \\
    O(\log \log T) & \text{if $b > a+1$}
    \end{cases}
\end{align*}
\end{proof}

% \begin{proof}[Proof of Case 2: $\error^{(t)}_{t-1+s} = O(\frac{e^s}{t^b})$]

% We can apply an analysis similar to the above with $\error^{(t)}_{s} = O(\frac{e^s}{t^b})$.
% We can get the number of planning steps $S = b \log t + \log B$, which yields the following recursive formulation if we define $S_i$ and $T_i$ as in Theorem~\ref{thm:regret-bound}:
% \begin{align*}
%     T_{i+1} = T_i + S_i = T_i + b \log T_i + \log B
% \end{align*}

% We can similarly show that $T_i \geq \frac{b}{2} i \log i$ and $T \geq \frac{b}{2} I \log I$ with $I \sim \frac{T/b}{W(T/b)} \sim O(\frac{T}{b \log T})$, where $W(T)$ is the product log function. We therefore can apply Theorem~\ref{thm:dynamic-proactive-planning} to get a regret bound $\text{Reg}_T = \Theta(BI) = O(\frac{B T}{b \log T})$, which is still sublinear in $T$ when $b > 0$.
% \end{proof}

\section{Proof of Corollary~\ref{coro:lower-bound}}
\lowerBound*

\begin{proof}
Let $\error^{(t)}_{s} = \frac{1}{t^b} = O(\frac{s^a}{t^b})$ for all $t, s \in \N$ with $\frac{1}{t^b} < \frac{1}{2}$.
We construct a sequence of one-dimensional incoming traffic $\parameter_t = \begin{cases} 
\frac{1}{2} + \frac{1}{t^b} \\
\frac{1}{2} - \frac{1}{t^b}
\end{cases}$ and a one-dimensional feasible set $\decisionset = \{0, 1\}$.
The prediction given by the predictive model is $\parameter^{(t)}_s = \frac{1}{2}$ for all $t, s \in \N$, whose predictive error satisfies the bound $\norm{\parameter_s - \parameter^{(t)}_s} \leq \frac{1}{t^b} = \error^{(t)}_{s}$.
Assume that the cost function is defined by $f(\decision, \parameter) = L \norm{\decision - \parameter}$ and there is no switching cost $d(\boldsymbol x, \boldsymbol y) = 0$.

Under this construction, if all the incoming traffics are given in advance, the optimal cost within $T$ time steps is:
\begin{align}
    & L \sum\limits_{i=1}^T \left(\frac{1}{2} - \frac{1}{t^b}\right) = \frac{LT}{2} - L \sum\limits_{i=1}^T t^{-b} \leq \begin{cases}
    \frac{LT}{2} - \frac{1}{1-b} T^{1-b} & \text{if $b < 1$} \\
    \frac{LT}{2} - \log T & \text{if $b = 1$} \\
    \frac{LT}{2} - \Theta(1) & \text{if $b > 1$}
\end{cases} \nonumber
\end{align}
where we can just choose $\decision_t = 1$ if $\parameter_t$ is closer to $1$ and $0$ otherwise.

On the other hand, if the incoming traffics are not given in advance, any decision made at time step $t$ produces cost $L(\frac{1}{2} + \frac{1}{t^b})$ with probability $\frac{1}{2}$ and cost $L(\frac{1}{2} - \frac{1}{t^b})$ with probability $\frac{1}{2}$, which gives expected cost $\frac{L}{2}$ and a cumulative cost $\frac{LT}{2}$. 
Therefore, the expected cumulative regret is at least $ \begin{cases}
\Theta(T^{1-b}) & \text{if $b < 1$} \\
\Theta(\log{T}) & \text{if $b = 1$} \\
\Theta(1) & \text{if $b > 1$}
\end{cases}
$.
\end{proof}

\section{Iterative Algorithm for Offline Problem with Switching Cost}\label{sec:iterative-algorithm}
Given imperfect predictions and the planning windows, we can reduce the online problem to an offline problem.
This section focuses on solving the following offline combinatorial optimization problem with switching cost.
\begin{align}\label{eqn:finite-optimization-problem-polished}
    \min_{ \decision_{t} \in \decisionset} ~ \sum\nolimits_{t = 1}^{S} f(\decision_{t}, \parameter_{t}) + d(\decision_{t}, \decision_{t-1}).
\end{align}

Solving Eq.~\eqref{eqn:finite-optimization-problem-polished} is challenging because the combinatorial structure of the decision $\decision_t \in \decisionset_t$ and the additional temporal dependency caused by the switching cost $d(\decision_{t}, \decision_{t-1})$.

\begin{algorithm}[tb]
\caption{Iterative algorithm for offline problems}
\label{alg:iterative-algorithm}
\begin{algorithmic}[1]
\STATE Let $J=10$ and $\decision_t = \decision_0$ for all $t \in [S]$. 
\FOR{$j \in [J]$}
    \FOR{$t \in [S]$}
        \STATE Let $c = 0.5$ if $j < J$ otherwise $c=1$.
        \STATE Solve Eq.~\eqref{eqn:iterative-operation} with $\decision_{t-1}$, $\decision_{t+1}, c$ to update $\decision_t$.
    \ENDFOR
\ENDFOR
\end{algorithmic}
\end{algorithm}

% An efficient and exact algorithm to solve the offline problems is unlikely to be available. These all indicate that we need an approximate algorithm to solve the offline problem with switching cost efficiently. The dynamic planning window also needs to reflect such change on the optimization performance.

% \subsection{Issue of Relaxation and Randomized Rounding With Switching Cost}
% One naive solution is to apply continuous relaxation and randomized rounding to optimization problem in Eq.~\eqref{eqn:finite-optimization-problem-polished}. However, the issue of relaxation and randomized rounding is the concern of integrality gap when switching cost is involved. Specifically, when we apply continuous relaxation and randomized rounding to the variable $\decision_t$, the term $f(\decision_{t}, \parameter^{(t)}_{t}) = \norm{\decision_{t}^\top \parameter}_\infty$ can roughly preserve a similar expected value and small standard deviation with nice integrality gap. However, the term $\norm{\decision_{t} - \decision_{t-1}}_1$ is more challenging because the integrality gap of this term can be infinity large. For example, when $\decision_{t} = \decision_{t-1}$ with no switching cost but both of them are fractional, the rounding outcome can have non-zero switching cost, which yields an infinity integrality gap.

\subsection*{Decomposition and Iterative Algorithm}
If we fix the assignments $\decision_{t-1}, \decision_{t+1}$, finding the optimal solution at time step $t$ reduces to the following problem with $c=1$:
\begin{align}\label{eqn:iterative-operation}
    \decision_t = \argmin_{\decision \in \decisionset_t} f(\decision, \parameter_t) + c(d(\decision, \decision_{t-1}) + d(\decision, \decision_{t+1})).
\end{align}

Compared to Eq.~\eqref{eqn:finite-optimization-problem-polished}, Eq.~\eqref{eqn:iterative-operation} avoids the temporal dependency across multiple time steps and largely reduces the number of binary variables.
In practice, solving Eq.~\eqref{eqn:iterative-operation} is more tractable than solving Eq.~\eqref{eqn:finite-optimization-problem-polished}.

This observation motivates the idea of iteratively fixing the neighbor decisions $\decision_{t-1}, \decision_{t+1}$ and updating the decision at time step $t$ for all $t \in [S]$.
We use $\decision_t = \decision_0$ to initialize all decisions. Then we can iteratively solve Eq.~\eqref{eqn:iterative-operation} with different $t$ to update the decision $\decision_t$. This method decouples the temporal dependency and reduces the problem to a standard combinatorial optimization of function $f$ with additional regularization terms. We can use mixed integer linear program or any other approximation algorithms to solve Eq.~\eqref{eqn:iterative-operation}.

Moreover, we can notice that any improvement made by solving Eq.~\eqref{eqn:iterative-operation} with $c=1$ provides the same improvement to Eq.~\eqref{eqn:finite-optimization-problem-polished}. This suggests that the optimal decision of Eq.~\eqref{eqn:finite-optimization-problem-polished} is a fixed point of Eq.~\eqref{eqn:iterative-operation} when $c=1$.

\begin{restatable}{theorem}{fixedPoint}\label{thm:fixed-point}
The optimal sequence $\{ \decision^*_t \}_{t \in [S]}$ of Eq.~\eqref{eqn:finite-optimization-problem-polished} is a fixed point of Eq.~\eqref{eqn:iterative-operation} with $c=1$.
\end{restatable}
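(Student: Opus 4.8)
The plan is to show that if $\{\decision^*_t\}_{t \in [S]}$ is optimal for Eq.~\eqref{eqn:finite-optimization-problem-polished}, then for every index $t \in [S]$ the point $\decision^*_t$ is itself a minimizer of Eq.~\eqref{eqn:iterative-operation} with $c = 1$, $\decision_{t-1} = \decision^*_{t-1}$, and $\decision_{t+1} = \decision^*_{t+1}$ (with the convention that the boundary terms $d(\decision, \decision_{t+1})$ at $t = S$ and $d(\decision, \decision_{t-1})$ at $t = 1$ are handled by the fixed initial/final decisions as in the algorithm). The key observation, already flagged in the text preceding the statement, is that the only terms in the global objective of Eq.~\eqref{eqn:finite-optimization-problem-polished} that involve the single variable $\decision_t$ are exactly $f(\decision_t, \parameter_t)$, $d(\decision_t, \decision_{t-1})$, and $d(\decision_{t+1}, \decision_t)$; all remaining summands are constant in $\decision_t$. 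Hence the restriction of the global objective to the coordinate $\decision_t$, with all other coordinates frozen at their optimal values, is precisely the objective of Eq.~\eqref{eqn:iterative-operation} with $c = 1$ up to an additive constant.

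The argument I would write out is a short contradiction. Suppose $\decision^*_t$ is not optimal for Eq.~\eqref{eqn:iterative-operation} with the neighbours frozen at $\decision^*_{t-1}, \decision^*_{t+1}$; then there is some $\boldsymbol y \in \decisionset_t$ with
\begin{align*}
    f(\boldsymbol y, \parameter_t) + d(\boldsymbol y, \decision^*_{t-1}) + d(\boldsymbol y, \decision^*_{t+1}) < f(\decision^*_t, \parameter_t) + d(\decision^*_t, \decision^*_{t-1}) + d(\decision^*_t, \decision^*_{t+1}).
\end{align*}
Form a new global sequence by replacing $\decision^*_t$ with $\boldsymbol y$ and keeping every other coordinate. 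Because the affected summands in Eq.~\eqref{eqn:finite-optimization-problem-polished} are exactly the three above (using that $d$ is symmetric, so $d(\decision^*_{t+1}, \decision^*_t) = d(\decision^*_t, \decision^*_{t+1})$), the global cost strictly decreases, contradicting optimality of $\{\decision^*_t\}$. Since $t$ was arbitrary, $\{\decision^*_t\}$ is a fixed point of the coordinate update, which is the claim.

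The only genuine subtlety — and the place I would be most careful — is the treatment of the boundary indices and of the symmetry of $d$. The switching cost $d(\decision_t, \decision_{t-1})$ is written with a fixed argument order in Eq.~\eqref{eqn:finite-optimization-problem-polished}, whereas Eq.~\eqref{eqn:iterative-operation} uses $d(\decision, \decision_{t-1}) + d(\decision, \decision_{t+1})$; matching these requires $d$ to be symmetric (it is a metric, per the problem statement), so $d(\decision_{t+1}, \decision_t) = d(\decision_t, \decision_{t+1})$ and the identification goes through. At $t = S$ there is no $\decision_{t+1}$ term in the global objective, and at $t = 1$ the term $d(\decision_1, \decision_0)$ refers to the fixed initial decision; in both cases the relevant single-variable restriction is still a special case of Eq.~\eqref{eqn:iterative-operation} with one neighbour term absent (or with a constant neighbour), so the contradiction argument is unchanged. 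I do not anticipate any real obstacle beyond bookkeeping these edge cases cleanly.
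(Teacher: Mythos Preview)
Your proposal is correct and is essentially the same contradiction argument the paper gives: assume some coordinate $\decision^*_t$ can be strictly improved in Eq.~\eqref{eqn:iterative-operation} with $c=1$, replace it in the full sequence, and contradict optimality of $\{\decision^*_t\}$ in Eq.~\eqref{eqn:finite-optimization-problem-polished}. Your additional care about the symmetry of $d$ and the boundary indices $t=1, t=S$ is more explicit than the paper's version but not a different idea.
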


\begin{proof}
Suppose that $\{ \decision_s^* \}_{s \in [S]} $ optimizes Eq.~\eqref{eqn:finite-optimization-problem-polished}. 
For any $t \in [S]$, if we can find $\decision'_t$ gets a positive improvement in Eq.~\eqref{eqn:iterative-operation} with $c=1$:
\begin{align}
    0 < \delta = & \left( f(\decision^*_t, \parameter_t) + d(\decision^*_t, \decision^*_{t-1} + d(\decision^*_t, \decision^*_{t+1}) \right) - \left( f(\decision'_t, \parameter_t) + d(\decision'_t, \decision^*_{t-1}) + d(\decision'_t, \decision^*_{t+1}) \right) \nonumber
\end{align}

Then the new sequence $\{\decision_1^*, \cdots, \decision_{t-1}^*, \decision'_{t}, \decision_{t+1}^*, \cdots, \decision_{S}^* \}$ gets the same improvement with:
\begin{align}
    & \text{cost}(\{\decision_s^*\}_{s \in [S]}) - \text{cost}(\{\decision_1^*, \cdots, \decision_{t-1}^*, \decision'_{t}, \decision_{t+1}^*, \cdots, \decision_{S}^* \}) = \delta > 0
\end{align}
where $\text{cost}(\{\decision_1^*, \cdots, \decision_{t-1}^*, \decision'_{t}, \decision_{t+1}^*, \cdots, \decision_{S}^* \})$ is strictly smaller than the optimal value $\text{cost}(\{\decision_s^*\}_{s \in [S]})$, which violates the optimality assumption of $\{ \decision^*_s \}_{s \in [S]}$. This implies that we cannot find $\decision'_t$ that gives a strictly smaller objective in Eq.~\eqref{eqn:iterative-operation}, which also implies that $\decision^*_{t}$ is a fixed point to Eq.~\eqref{eqn:iterative-operation} with $c=1$ using $\decision^*_{t-1}$ and $\decision^*_{t+1}$ as the neighbor decisions.
\end{proof}

Theorem~\ref{thm:fixed-point} ensures that the iterative process in Eq.~\ref{eqn:iterative-operation} stops updating at the optimal solution.
However, in practice, there could be multiple fixed points and suboptimal points due to the combinatorial structure.
This can be problematic because the iterative process in Eq.~\ref{eqn:iterative-operation} can stop at many different suboptimal solution without further improving the solution quality.
To avoid getting stuck by suboptimal solutions, we use a smaller scaling constant $c=0.5$ to relax the iterative update, and use $c=1$ in the final step to strengthen the solution. The iterative algorithm is described in Algorithm~\ref{alg:iterative-algorithm}, which can be used to replace Line~\ref{line:optimization-step} in Algorithm~\ref{alg:infinite-proactive-planning}.

\end{document}